\def\UrlSpecials{\do\~{\kern -.15em\lower .7ex\hbox{~}\kern .04em}} \catcode`~=13 
\newcommand{\nn}{\nonumber}
\newcommand{\calH}{\mathcal{H}}
\newcommand{\calI}{\mathcal{I}}
\newcommand{\calK}{\mathcal{K}}
\newcommand{\calL}{\mathcal{L}}
\newcommand{\calN}{\mathcal{N}}
\newcommand{\calP}{\mathcal{P}}
\newcommand{\calR}{\mathcal{R}}
\newcommand{\calT}{\mathcal{T}}
\newcommand{\calU}{\mathcal{U}}
\newcommand{\calY}{\mathcal{Y}}
\newcommand{\bE}{\mathbf{E}}
\newcommand{\bI}{\mathbf{I}}
\newcommand{\bM}{\mathbf{M}}
\newcommand{\bP}{\mathbf{P}}
\newcommand{\bw}{\mathbf{w}}
\newcommand{\bz}{\mathbf{z}}
\newcommand{\rmG}{\mathrm{G}}
\newcommand{\bbC}{\mathbb{C}}
\DeclareMathAlphabet{\mathbsf}{OT1}{cmss}{bx}{n}
\DeclareMathAlphabet{\mathssf}{OT1}{cmss}{m}{sl}% slanted sans serif
\DeclareSymbolFont{bsfletters}{OT1}{cmss}{bx}{n}  
\DeclareSymbolFont{ssfletters}{OT1}{cmss}{m}{n}
\DeclareMathSymbol{\bsfGamma}{0}{bsfletters}{'000}
\DeclareMathSymbol{\ssfGamma}{0}{ssfletters}{'000}
\DeclareMathSymbol{\bsfDelta}{0}{bsfletters}{'001}
\DeclareMathSymbol{\ssfDelta}{0}{ssfletters}{'001}
\DeclareMathSymbol{\bsfTheta}{0}{bsfletters}{'002}
\DeclareMathSymbol{\ssfTheta}{0}{ssfletters}{'002}
\DeclareMathSymbol{\bsfLambda}{0}{bsfletters}{'003}
\DeclareMathSymbol{\ssfLambda}{0}{ssfletters}{'003}
\DeclareMathSymbol{\bsfXi}{0}{bsfletters}{'004}
\DeclareMathSymbol{\ssfXi}{0}{ssfletters}{'004}
\DeclareMathSymbol{\bsfPi}{0}{bsfletters}{'005}
\DeclareMathSymbol{\ssfPi}{0}{ssfletters}{'005}
\DeclareMathSymbol{\bsfSigma}{0}{bsfletters}{'006}
\DeclareMathSymbol{\ssfSigma}{0}{ssfletters}{'006}
\DeclareMathSymbol{\bsfUpsilon}{0}{bsfletters}{'007}
\DeclareMathSymbol{\ssfUpsilon}{0}{ssfletters}{'007}
\DeclareMathSymbol{\bsfPhi}{0}{bsfletters}{'010}
\DeclareMathSymbol{\ssfPhi}{0}{ssfletters}{'010}
\DeclareMathSymbol{\bsfPsi}{0}{bsfletters}{'011}
\DeclareMathSymbol{\ssfPsi}{0}{ssfletters}{'011}
\DeclareMathSymbol{\bsfOmega}{0}{bsfletters}{'012}
\DeclareMathSymbol{\ssfOmega}{0}{ssfletters}{'012}
\newcommand{\tilf}{\tilde{f}}
\newcommand{\hatP}{\hat{P}}
\newcommand{\hatR}{\hat{R}}
\newcommand{\hatX}{\hat{X}}
\newcommand{\tilX}{\tilde{X}}
\newcommand{\barx}{\bar{x}}
\newcommand{\barX}{\bar{X}}
\newcommand{\bpi}{\bm{\pi}}
\newcommand{\bepsilon}{\bm{\epsilon}}
\newcommand{\eps}{\varepsilon}
\DeclareMathOperator{\diag}{diag}
\DeclareMathOperator{\sgn}{sgn}
\DeclareMathOperator{\tr}{tr}
\DeclareMathOperator{\var}{\mathsf{Var}}
\newcommand{\bone}{\mathbf{1}}
\newtheorem{theorem}{Theorem} 
\newtheorem{lemma}[theorem]{Lemma}
\newtheorem{proposition}[theorem]{Proposition}
\newtheorem{corollary}[theorem]{Corollary}
\newtheorem{definition}[theorem]{Definition}
\newtheorem{remark}[theorem]{Remark}
\newcommand{\qednew}{\nobreak \ifvmode \relax \else
      \ifdim\lastskip<1.5em \hskip-\lastskip
      \hskip1.5em plus0em minus0.5em \fi \nobreak
      \vrule height0.75em width0.5em depth0.25em\fi}
\newcommand{\bx}{\mathbf{x}}
\newcommand{\bX}{\mathbf{X}}
\newcommand{\by}{\mathbf{y}}
\newcommand{\bY}{\mathbf{Y}}
\newcommand{\bPhi}{\bm{\Phi}}
\newcommand{\bQ}{\mathbf{Q}}
\newcommand{\calE}{\mathcal{E}}
\newcommand{\calX}{\mathcal{X}}
\newcommand{\calS}{\mathcal{S}}
\newcommand{\calF}{\mathcal{F}}
\newcommand{\calM}{\mathcal{M}}
\newcommand{\calG}{\mathcal{G}} % Free energy function (Markov source)
\newcommand{\bbR}{\mathbb{R}} % Real Numbers
\newcommand{\bbZ}{\mathbb{Z}} % Integer
\newcommand{\bbE}{\mathbb{E}} % Expectation
\newcommand{\bbP}{\mathbb{P}} % Probability of a set
\title{Generalization Bounds on Multi-Kernel Learning with Mixed Datasets}
\author{%
	Lan V. Truong\thanks{Use footnote for providing further information
		about author (webpage, alternative address)---\emph{not} for acknowledging
		funding agencies.} \\
	Department of Engineering\\
	University of Cambridge\\
	Cambridge, CB2 1PZ \\
	\texttt{lt407@cam.ac.uk} \\
	% examples of more authors
	% \And
	% Coauthor \\
	% Affiliation \\
	% Address \\
	% \texttt{email} \\
	% \AND
	% Coauthor \\
	% Affiliation \\
	% Address \\
	% \texttt{email} \\
	% \And
	% Coauthor \\
	% Affiliation \\
	% Address \\
	% \texttt{email} \\
	% \And
	% Coauthor \\
	% Affiliation \\
	% Address \\
	% \texttt{email} \\
}
\begin{document}
	\maketitle
		\begin{abstract}
		This paper presents novel generalization bounds for the multi-kernel learning problem. Motivated by applications in sensor networks and spatial-temporal models, we assume that the dataset is mixed where each sample is taken from a finite pool of Markov chains. Our bounds for learning kernels admit $O(\sqrt{\log m})$ dependency on the number of base kernels and $O(1/\sqrt{n})$ dependency on the number of training samples. However, some $O(1/\sqrt{n})$ terms are added to compensate for the dependency among samples compared with existing generalization bounds for multi-kernel learning with i.i.d. datasets. 
	\end{abstract}
	\section{Introduction}
	Kernel methods are widely used in statistical learning, which use kernel functions to operate in a  high-dimensional implicit feature space without ever computing the coordinates of the data in that space. The best known member is Support Vector Machines (SVMs) for classification and regression. The performance of a  kernel machine depends on the data representation via the choice of kernel function. Rather than requesting the user to commit to a specific kernel, which may not be optimal due to the user's limited knowledge about the task, learning kernel methods require the user only to supply a family of kernels. The learning algorithm then selects both the specific kernel out of that family, and the hypothesis defined based on that kernel. Kernel learning can range from the width parameter selection of Gaussian kernels to obtaining an optimal linear combination from a set of finite candidate kernels. The later is often referred to as multiple kernel learning (MKL) in machine learning. 
	
	Lanckriet et al. \citep{Lanckriet2004a} pioneered work on MKL and proposed a semi-definite programming (SDP) approach to automatically learn a linear combination of candidate kernels for the case of SVMs. There is a large body of literature dealing with various aspects of the problem of learning kernels, including theoretical questions, optimization problems related
	to this problem, and experimental results. \citep{YimingCOLT2009} developed a probabilistic generalization bound for learning the kernel problem via Rademacher chaos complexity. They also showed how to estimate the empirical Rademacher chaos complexity by well-established metric entropy integrals and pseudo-dimension of the set of the candidate kernels. For a convex combination of $m$ kernels, their bounds is in $O(\sqrt{m})$. \citep{Cortes2010} improved Yiming and Campbell's bound to $O(\sqrt{\log m})$ by using the generalization bounds for classifiers in \citep{Koltchinskii2002} and better bounding the Rademacher complexity function via combinatorial tools. They also presented other bounds for learning with a non-negative
	combination of $m$ base kernels with an $L_q$ regularization for other value of $q$.  \citep{Hussain2011} presented a new Rademacher complexity bound which is additive in the (logarithmic) kernel complexity and margin term. This independence is superior to all previously Rademacher bounds for learning a convex combination of kernels, including \citep{Cortes2010}. \citep{Liu2017InfiniteKL} proposed a new kernel learning method based which can learn the optimal kernel with sharp generalization bounds over the convex hull of a possibly infinite set of basic kernels. Some other works have been focusing on designing algorithms to select optimal kernels for various models in practice \citep{Akian2022Learning,Lalchand2022KernelLF}. 
	
	In the above research literature, the dataset is usually assumed to be generated by an i.i.d. process with unknown distribution. However, in many applications in machine learning such as speech, handwriting, gesture recognition, and bio-informatics, the samples of data are
	usually correlated. \citep{Truong2022GEB} has recently provided generalization bounds for learning with Markov dataset based on Rademacher and Gaussian complexity functions. In this work, we develop a novel generalization bound for MKL based on Rademacher complexity function for mixed datasets where each sample is selected from a finite pool of Markov chains. Our problem setting is motivated by 
	the fact that the data can be a mixture of many populations (sources) where each the data in each source is correlated in time such as in time-series and spatio-temporal datasets. In spatio-temporal datasets, samples are usually correlated in both time and space domains. However, in this work we assume that the data is uncorrelated in the space domain which can happen in many applications such as in sensor networks where the center has data from different sensors (populations). Our work can be also considered a step toward understanding the effects of dataset structures on the generalization errors in machine learning. 
	\section{Preliminaries} \label{sec1}
	\subsection{Problem settings} \label{sub:setting}
	In this paper, we use the same problem setting as \citep{Hussain2011} except for the mixed dataset assumption. Let $[n]=\{1,2,\cdots, n\}$ and we are interested in the classification problem on the input space $\calX \subset \bbR^d$ and output space $\calY=\{\pm 1\}$. The relationship between input $X$ and output $Y$ is specified by a set of training samples $\bz=\{(X_i,Y_i): X_i \in \calX, Y_i \in \calY, i \in [n]\}$.

	Let $\calK$ be a prescribed (possible infinite) set of candidate (base kernels) and denote the candidate reproducing kernel Hilbert space (RKHS) with kernel $K$ by $\calH_K$ with norm $\|\cdot\|_K$. For any kernel function $K$, we denote by $\bPhi_K: x \mapsto \calH_K$ the feature mapping from $\calX$ to the reproducing kernel Hilbert space $\calH_K$ induced by $K$. As \citep{Bartlett2002jmlr}, we limit 
	\begin{align}
	\calH_K:=\bigg\{x\mapsto \langle \bw, \bPhi_K(x)\rangle_K: \|\bw\|_K \leq B \bigg\},
	\end{align} for some positive finite constant $B>0$ such as in the support vector machine (SVM). In addition, we always assume that the quantity $\kappa:=\sup_{K \in \calK, x \in \calX} \sqrt{K(x,x)}$ is finite. 
	
	In research literature, the set of kernels $\calK$ is usually a non-negative combinations of a finite set of base kernels, say $\{K_1,K_2,\cdots,K_m\}$,  with the mixture of weights obeying an $L_q$ constraints (cf.~\citep{Cortes2010}):
	\begin{align}
	&\calK_m^q(K_1,K_2,\cdots,K_m)\nn\\
	&\qquad :=\bigg\{K=\sum_{i=1}^m \eta_i K_i: \eta \geq 0, \sum_{i=1}^m \eta_i^q=1 \bigg\}.
	\end{align}
	
	The MKL can be described as finding a function $f$ from the class of functions  $\calH_{\calK}=\bigcup_{K \in \calK}\calH_K$ that minimizes
	\begin{align}
	\calE^{\phi}_{\bz}(f):=\frac{1}{n}\sum_{i=1}^n \phi(Y_i f(X_i)/\delta),
	\end{align} where $\phi(t)=(1-t)_+$ which is the hinge loss. We call $\delta \in (0,1]$ the margin. 
	
	In this work, we assume that features $\{X_n\}_{n=1}^{\infty}$ are generated by a finite pool of order-$1$ Markov chains $\calP$ on $\Lambda$ (a mixed dataset)\footnote{Extension to high-order Markov chains is obtained based on the conversion of these Markov chains to equivalent $1$-order Markov chains (see~\cite{Truong2022GEB}).}. Furthermore, the probability that each sample $X_n$ is from the Markov chain $P$ is $\mu_P$ for each $P \in \calP$. Besides, for each sub-sequence $\{X_{P,k}\}_{k=1}^{\infty}$ of $\{X_n\}_{n=1}^{\infty}$, which is a Markov chain with stochastic matrix $P \in \calP$, their corresponding labels, i.e., $\{Y_{P,k}\}_{k=1}^{\infty}$, are generated by $\{X_{P,k}\}_{k=1}^{\infty}$ via a Hidden Markov Model (HMM) with emission probability $P_{P}(y|x)$. With this assumption, $v_P:=\{(X_{P,k},Y_{P,k})\}_{k=1}^{\infty}$ forms a Markov chain for each fixed $P \in \calP$ with stationary distribution $\pi_P$ \citep{Truong2022GEB}. The i.i.d. and Markov datasets can be considered as special cases of this dataset structure. 
	
	The true error or \emph{generalization error} of a function $f$ is defined as:
	\begin{align}
	R(f):=\sum_{P \in \calP} \mu_P \bbP_{(X,Y)\sim \pi_P}[Yf(X)\leq 0],
	\end{align} and the empirical margin error of $f$ with margin margin $\delta \in (0,1]$:
	\begin{align}
	\hatR_{\delta}(f)=\frac{1}{n}\sum_{i=1}^n \bone_{Y_i f(X_i)<\delta},	
	\end{align} where $\bone$ is the indicator function. The estimation error $\calE_{\delta}(f)$ is defined as
	\begin{align}
	\calE_{\delta}(f):=R(f)- \hatR_{\delta}(f).
	\end{align} 
	Our target is to find an upper PAC-bound on $\calE_{\delta}(f)$ which holds for any $f \in \calH_{\calK}$.
	\subsection{Mathematical Backgrounds}\label{sec:background}
	Let a Markov chain $\{X_n\}_{n=1}^{\infty}$ on a state space $\calS$ with transition kernel $Q(x,dy)$ and the initial state $X_1 \sim \nu$, where $\calS$ is a Polish space in $\bbR$. In this paper, we consider Markov chains which are irreducible and positive-recurrent, so the existence of a stationary distribution $\pi$ is guaranteed. An irreducible and recurrent Markov chain on an infinite state-space is called Harris chain \citep{TR1979}. A Markov chain is called \emph{reversible} if the following detailed balance condition is satisfied:
	\begin{align}
	\pi(dx)Q(x,dy)=\pi(dy)Q(y,dx),\qquad \forall x, y \in \calS.
	\end{align} 
	Define
	\begin{align}
	d(t)=\sup_{x \in \calS} d_{\rm{TV}}(Q^t(x,\cdot),\pi)
	\end{align}
	and
	\begin{align}
	t_{\rm{mix}}(\eps):=\min\{t: d(t)\leq \eps\},
	\end{align}
	and
	\begin{align}
	\tau_{\min}:=\inf_{0\leq \eps\leq 1}t_{\rm{mix}}(\eps)\bigg(\frac{2-\eps}{1-\eps}\bigg)^2,\quad 
	t_{\rm{mix}}:=t_{\rm{mix}}(1/4) \label{deftaumin}.
	\end{align}
	
	Let $L_2(\pi)$ be the Hilbert space of complex valued measurable functions on $\calS$ that are square integrable w.r.t. $\pi$. We endow $L_2(\pi)$ with inner product $\langle f,g \rangle:= \int f g^* d\pi$, and norm $\|f\|_{2,\pi}:=\langle f, f\rangle_{\pi}^{1/2}$. $Q$ can be viewed as a linear operator (infinitesimal generator) on $L_2(\pi)$, denoted by $\bQ$, defined as $(\bQ f)(x):=\bbE_{Q(x,\cdot)}(f)$, and the reversibility is equivalent to the self-adjointness of $\bQ$. The operator $\bQ$ acts on measures on the left, creating a measure $\mu \bQ$, that is, for every measurable subset $A$ of $\calS$, $\mu \bQ (A):=\int_{x \in \calS} Q(x,A)\mu(dx)$. Let $\bE_{\pi}$ be the associated averaging operator defined by $(E_{\pi})(x,y)=\pi(y), \forall x,y \in \calS$, and 
	\begin{align}
	\lambda=\|\bQ-\bE_{\pi}\|_{L_2(\pi)\to L_2(\pi)} \label{defL2gap},
	\end{align} where
	$
	\|B\|_{L_2(\pi)\to L_2(\pi)}=\max_{v: \|v\|_{2,\pi}=1}\|Bv\|_{2,\pi}.
	$
	For a Markov chain with stationary distribution $\pi$, we define the \emph{spectrum} of the chain as
	\begin{align}
	S_2:=\big\{\xi \in \bbC: (\xi \bI-\bQ) \enspace \mbox{is not invertible on} \enspace L_2(\pi)\big\}.
	\end{align}
	It is known that $\lambda=1-\gamma^*$ \cite{Daniel2015}, 
	where
	\begin{align}
	\gamma^*&:=\begin{cases} 1-\sup\{|\xi|: \xi \in \calS_2, \xi \neq 1\},\nn\\
	\qquad \qquad \mbox{if eigenvalue $1$ has multiplicity $1$,}\\
	0,\qquad\qquad \mbox{otherwise}\end{cases}
	\end{align} is the \emph{the absolute spectral gap} of the Markov chain. The absolute spectral gap can be bounded by the mixing time $t_{\rm{mix}}$ of the Markov chain by the following expression:
	\begin{align}
	\bigg(\frac{1}{\gamma^*}-1\bigg)\log 2 \leq t_{\rm{mix}} \leq \frac{\log (4/\pi_*)}{\gamma_*},	
	\end{align}
	where $\pi_*=\min_{x\in \calS} \pi_x$ is the \emph{minimum stationary probability}, which is positive if $Q^k>0$ (entry-wise positive) for some $k\geq 1$. See \citep{WK19ALT} for more detailed discussions. In \citep{Combes2019EE, WK19ALT}, the authors provided algorithms to estimate $t_{\rm{mix}}$ and $\gamma^*$ from a single trajectory. 
	
	For a Markov chain with transition kernel $Q(x,dy)$, and stationary distribution $\pi$, we define the time reversal of $Q$ as the Markov kernel
	\begin{align}
	Q^*(x,dy):=\frac{Q(y,dx)}{\pi(dx)} \pi(dy).
	\end{align}
	Then, the linear operator $\bQ^*$ is the adjoint of the linear operator $\bQ$, on $L_2(\pi)$ if the Markov chain is reversible. For reversible chains, $S_2$ lies on the real line. We define the \emph{spectral gap} for reversible chains as
	\begin{align}
	\gamma&:=\begin{cases} 1-\sup\{\xi: \xi \in \calS_2, \xi \neq 1\},\nn\\
	\qquad \qquad \mbox{if eigenvalue $1$ has multiplicity $1$,}\\
	0, \qquad \qquad \mbox{otherwise}\end{cases}.
	\end{align}
	Obviously, $\gamma\geq \gamma^*$. For non-reversible Markov chain, we define a new quantity, called the \emph{pseudo spectral gap} of $\bQ$, as
	\begin{align}
	\gamma_{\rm{ps}}:=\max_{k\geq 1} \big\{\gamma((\bQ^*)^k \bQ^k)/k\big\} \label{defps},
	\end{align} where $\gamma((\bQ^*)^k \bQ^k)$ denotes the spectral gap of the self-adjoint operator $(\bQ^*)^k \bQ^k$. It is known that the pseudo-spectral gap $\gamma_{\rm{ps}}$ and the mixing time $t_{\rm{mix}}$ of an ergodic (irreducible) and reversible Markov chain is related to each other \citep{Daniel2015}. In \citep{WK19ALT}, the authors also provided algorithms to estimate $t_{\rm{mix}}$ and $\gamma_{\rm{ps}}$ from a single trajectory.
	%Define
	%\begin{align}
	%\calM_2:=\bigg\{\nu \in \calM(\calS): \bigg\|\frac{dv}{d\pi}\bigg\|_2<\infty\bigg\},
	%\end{align} where $\|\cdot\|_2$ is the standard $L_2$ norm in the Hilbert space of complex valued measurable functions on $\calS$. 
	\subsubsection{Reproducing Kernel Hilbert Space}
	Let $\calF$ be a class of functions defined in $E$, forming a Hilbert space (complex or real). The function $K(x,y)$ of $x$ and $y$ in $E$ is called a \emph{reproducing kernel} of $\calF$ if
	\begin{itemize}
		\item For every $y$, $K(x,y)$ as function of $x$ belongs to $\calF$.
		\item The \emph{reproducing property:} for every $y\in E$ and every $f \in \calF$, 
		\begin{align}
		f(y)=\langle f(x), K(x,y)\rangle_x
		\end{align}
		The subscript $x$ by the scalar product indicates that the scalar product applies to functions of $x$.
	\end{itemize}
	For the existence of a reproducing kernel $K(x,y)$ it is necessary and sufficient that for every $y$ of the set $E$, $f(y)$ be a continuous functional of $f$ running through the Hilbert space $\calF$. $K(x,y)$ is a positive matrix in the sense of Moore, that is, the quadratic form in $\xi_1,\xi_2,\cdots, \xi_n$,
	\begin{align}
	\sum_{i,j=1}^n K(y_i,y_j)\xi_i^*\xi_j \label{eq1}
	\end{align} is non-negative for all $y_1,y_2,\cdots, y_n$ in $E$. This is clear since \eqref{eq1} equals $\|\sum_{i=1}^n K(x,y_i)\xi_i\|^2$, following the reproducing property. The result in \eqref{eq1} admits a converse due essentially to Moorse: \emph{to every positive matrix $K(x,y)$ there corresponds one and only one class of functions with a unique determined quadratic form in it, forming a Hilbert space and admitting $K(x,y)$ as a reproducing kernel.} This class of functions is generated by all the functions of the form $\sum_k \alpha_k K(x,y_k)$. The norm of this function is defined by the quadratic form $\|\sum_k \alpha_k K(x,y_k)\|^2=\sum \sum K(y_i,y_j)\xi_i^*\xi_j$. Refer to \citep{Aronszajn} for more properties of reproducing kernels.
	\subsection{Notations}
	Consider a sequence $\{X_n\}_{n=1}^{\infty}$ on $\Lambda_1 \times \Lambda_2 \times \cdots \times \Lambda_n \times \cdots$ where each sample is taken from a finite pool of Markov chains $\calP$. Assume that the Markov chain $P$ has the stationary distribution $\pi_P$ for each $P \in \calP$. Let $\calF$ be classes of functions from $\calX \to \bbR$. For each function $f \in \calF$, define
	\begin{align}
	P_nf:=\frac{1}{n}\sum_{i=1}^nf(X_i) \label{a3b},
	\end{align} 
	and
	\begin{align}
	Pf:= \sum_{P \in\calP} \mu_P  \int_{\calX} f(x) \pi_P(x) dx \label{a3}.
	\end{align} 
	\emph{The Rademacher complexity function} of the class $\calF$ is defined as
	\begin{align}
	R_n(\calF):=\bbE\bigg[\sup_{f \in \calF} \bigg|n^{-1}\sum_{i=1}^n \eps_i f(X_i)\bigg|\bigg] \label{defRM},
	\end{align} where $\{\eps_i\}$ is a sequence of i.i.d. Rademacher (taking values $+1$ and $-1$ with probability $1/2$ each) random variables, independent of $\{X_i\}$. 
	\section{Main Results}
	In order to obtain generalization error bounds for kernel learning with mixed datasets, we need to develop a new concentration bound and a symmetrization inequality for this type of dataset. First, we introduce how to use the Marton coupling for deriving the McDiarmid's inequality for the mixed dataset. 
	\begin{definition}\citep{Daniel2015}\label{defMarton} Let $\bX:=(X_1,X_2,\cdots,X_n)$ be a vector of random variables taking values in $\Lambda:=\Lambda_1 \times \Lambda_2 \cdots \times \Lambda_n$. We define a Marton coupling for $\bX$ as a set of couplings
		\begin{align}
		\big(\bX^{(x_1,x_2,\cdots,x_i,x_i')}, \barX^{(x_1,x_2,\cdots,x_i,x_i')}\big) \in \Lambda \times \Lambda,
		\end{align} for every $i \in [n]$, every $x_1 \in \Lambda_1, x_2 \in \Lambda_2,\cdots, x_i \in \Lambda_i, \barx_i \in \Lambda_i$, satisfying the following conditions:
		\begin{itemize}
			\item $X_1^{(x_1,x_2,\cdots,x_i,\barx_i)}=x_1, \cdots, X_{i-1}^{(x_1,x_2,\cdots,x_i,\barx_i)}=x_{i-1}, X_i^{(x_1,x_2,\cdots,x_i,x_i')}=x_i$,\\
			$\barX_1^{(x_1,x_2,\cdots,x_i,\barx_i)}=x_1, \cdots, \barX_{i-1}^{(x_1,x_2,\cdots,x_i,\barx_i)}=x_{i-1}, \barX_i^{(x_1,x_2,\cdots,x_i,\barx_i)}=\barx_i$.
			\item $\big(X_{i+1}^{(x_1,x_2,\cdots,x_i,\barx_i)},\cdots, X_n^{(x_1,x_2,\cdots,x_i,\barx_i)}\big)\nn\\
			\sim \calL\big(X_{i+1},X_{i+2},\cdots, X_n|X_1=x_1,\cdots,X_{i-1}=x_{i-1},X_i=x_i\big)$\\
			$\big(\barX_{i+1}^{(x_1,x_2,\cdots,x_,\barx_i)},\cdots, \barX_n^{(x_1,x_2,\cdots,x_i,\barx_i)}\big)\nn\\
			\sim \calL\big(X_{i+1},X_{i+2},\cdots, X_n|X_1=x_1,\cdots,X_{i-1}=x_{i-1}, X_i=\barx_i\big)$
			\item If $x_i=\barx_i$, then $X^{(x_1,x_2,\cdots,x_i,\barx_i)}=\barX^{(x_1,x_2,\cdots,x_i,\barx_i)}$. 
		\end{itemize}
		For a Marton coupling, we define the mixing matrix $\Gamma:=(\Gamma_{i,j})_{i,j\leq n}$ as an upper bound diagonal matrix with $\Gamma_{i,i}:=1$ for all $i\leq n$, and
		\begin{align}
		\Gamma_{j,i}&:=0,\\ \Gamma_{i,j}&:=\sup_{x_1,x_2,\cdots,x_i,\barx_i}\bbP\big[X_j^{(x_1,x_2,\cdots,x_i,\barx_i)} \neq \barX_j^{(x_1,x_2,\cdots,x_i,\barx_i)}\big]
		\end{align} for all $1\leq i<j\leq n$.
	\end{definition} 	
	\begin{definition}
		A partition of a set $\calS$ is the division of $\calS$ into disjoint non-empty subsets that together cover $\calS$. Analogously, we say that $\hat{\bX}=(\hatX_1,\hatX_2,\cdots,\hatX_m)$ is a partition of a vector of random variables $\bX=(X_1,X_2,\cdots,X_n)$ if $(\hatX_i)_{1\leq i\leq m}$ is a partition of the set $\{X_1,X_2,\cdots,X_n\}$. For a partition, we denote the number of elements of $\hatX_i$ by $s(\hatX_i)$ and call $s(\hatX):=\max_{i \in m} s(\hatX_i)$ the size of the partition.  
	\end{definition}
	Then, the following result can be shown.
	\begin{lemma} \label{ma:lem} Let $\eps \in [0,1]$. Suppose that $\bX:=(X_1,X_2,\cdots,X_n)$ is a mixed Markov sequence with transition probabilities in a set $\calP$. Then, there exists a partition $\hat{\bX}$ of $\bX$ and a Marton coupling for this partition $\hat{\bX}$ whose mixing matrix $\Gamma$ satisfies
		\begin{align}
		\Gamma:=\diag(\Gamma_P: P \in \calP) \label{uform}
		\end{align} where
		\begin{align}
		\Gamma_P:\leq \begin{bmatrix}
		1&1&\eps&\eps^2&\eps^3&\cdots &\eps^{\mu_P n-3}&\eps^{\mu_P n-2}\\0&1&1&\eps&\eps^2&\cdots &\eps^{\mu_P n-4}&\eps^{\mu_P n-3}\\ \vdots&\vdots&\vdots&\vdots&\vdots&\cdots&\vdots&\vdots\\ 0&0&0&0&\cdots&\cdots&\cdots& 1
		\end{bmatrix} \label{G1}.
		\end{align}
		Here, $A \leq B$ if each element in the matrix $A$ is less than or equal to the corresponding element (i.e., the same row and column) in the matrix $B$. 
	\end{lemma}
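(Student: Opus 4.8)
The plan is to build the partition from the latent source assignments, reduce the construction to one Markov chain at a time, and then read off the block structure from the independence across sources.

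First I would condition on the sequence of source labels, i.e.\ on which chain $P\in\calP$ generated each $X_i$. Given this labelling the index set $[n]$ splits into disjoint blocks, one per $P\in\calP$, and collecting the samples of block $P$ in their original order yields a subsequence $v_P=\{(X_{P,k},Y_{P,k})\}_{k=1}^{N_P}$ with $N_P$ entries ($N_P\approx\mu_P n$). By the modelling assumption each $v_P$ is itself a Markov chain with transition matrix $P$ and stationary law $\pi_P$, and distinct blocks are mutually independent. Reordering the coordinates so that the samples of each source are contiguous defines the partition $\hat{\bX}=(\hatX_P:P\in\calP)$ on which the Marton coupling will be constructed.

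Next I would construct the coupling one source at a time. Fix $P$ and a perturbation index $i$ inside block $P$; to build $(\bX^{(\cdots,x_i,\barx_i)},\barX^{(\cdots,x_i,\barx_i)})$ I keep every coordinate with index $<i$ equal to the prescribed $x_1,\dots,x_{i-1}$, set the $i$-th coordinate to $x_i$ (resp.\ $\barx_i$), and generate the tail coordinates $>i$ by running the two copies of the chain $P$ forward under a \emph{maximal (greedy) coupling} at each step, so that the copies coalesce as soon as they meet. Because the blocks are independent, the coordinates of every other source $P'\neq P$ can be generated jointly and identically in the two copies; hence perturbing a coordinate of $P$ never disturbs a coordinate of $P'$. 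This simultaneously verifies the three defining conditions of a Marton coupling (matching prefix, correct conditional tail law, and equality of the two copies when $x_i=\barx_i$) and forces every off-block entry of $\Gamma$ to vanish, giving $\Gamma=\diag(\Gamma_P:P\in\calP)$.

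It then remains to bound the within-block entries. Writing $\eps$ for an upper bound on the worst one-step total-variation contraction coefficient $\sup_{P\in\calP}\sup_{x,y}d_{\mathrm{TV}}(P(x,\cdot),P(y,\cdot))$ of the pool (a quantity controlled by the mixing-time parameters of Section~\ref{sec:background}), the maximal coupling fails to coalesce at any single step with probability at most $\eps$. Iterating, the probability that the two copies still disagree $k$ steps after the perturbation is at most $\eps^{k-1}$ once the trivial bound $1$ is used for the first step; that is, $\Gamma_{P;i,j}\le\eps^{\,j-i-1}$ for $j\ge i+1$ and $\Gamma_{P;i,i}=1$, which is exactly the upper-triangular form claimed in \eqref{G1}. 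The main obstacle is the middle step: one must check that, under the reordering induced by the partition, conditioning on the prefix and perturbing a single coordinate really does decouple into (i) an untouched, independently generated contribution from the other sources and (ii) a genuine single-chain Marton coupling on block $P$, so that the interleaving of sources in the original index order spoils neither the conditional-law requirement nor the clean geometric contraction.
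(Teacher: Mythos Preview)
Your approach is essentially the same as the paper's: partition the sample by source chain, exploit the independence across sources to obtain the block-diagonal form of $\Gamma$, and invoke a known single-chain Marton coupling within each block. The paper's proof is terse --- it simply defines $\calT_P$, cites \cite[Lemma~2.4]{Daniel2015} for the within-block construction, and writes the joint coupling law as a product of the per-source optimal laws --- whereas you spell out the greedy/maximal coupling explicitly. Your self-identified ``main obstacle'' (checking that the conditional-law requirement factors across sources after reordering) is exactly what the paper dispatches by that product formula.

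One genuine point of divergence: you take $\eps$ to be the one-step Dobrushin contraction coefficient and work at the level of individual samples, which needs that coefficient to be strictly below $1$. The paper instead follows Paulin's construction, in which one further sub-partitions each $\calT_P$ into consecutive blocks of length $t_{\mathrm{mix},P}(\eps)$; there $\eps\in[0,1]$ is a free parameter and the geometric decay $\eps^{j-i-1}$ comes from the definition of mixing time, not a one-step bound. This is more general (it covers chains whose one-step coefficient is $1$) and is why the paper speaks of a \emph{sub}-partition inside each $\calT_P$. Apart from this refinement the two arguments coincide.
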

	\begin{proof}
		Let 
		\begin{align}
		\calT_P:= \big\{i: X_i \enspace \mbox{is taken from the Markov chain $P$}\big\}
		\end{align} for all $P \in \calP$. Then, we form a Marton coupling for the mixed dataset as follows. We partition the sequence $\bX=(X_1,X_2,\cdots,X_n)$ into $|\calP|$ partition $\calT_P$'s. In each partition $\calT_P$, we use the same sub-partition as in \cite[Lemma 2.4]{Daniel2015}. The Marton coupling for each mixed sequence $\bX=(X_1,X_2,\cdots,X_n)$ is defined as
		\begin{align}
		&\calL\big(X_{i+1},X_{i+2},\cdots, X_n|X_1=x_1,\cdots,X_i=x_i\big)\nn\\
		&\quad= \prod_{P\in \calP}\calL_P^*\big(X_j:j \geq i+1, j \in \calT_P\big|x_j:j\leq i, j \in \calT_P\big),  
		\end{align} where $\calL_P^*\big(X_j:j \geq i+1, j \in \calT_P\big|x_j:j\leq i, j \in \calT_P\big)$ is the optimal law in \cite[Lemma 2.4]{Daniel2015} for each Markov chain $P \in \calP$. 
		
		By the partition and Marton coupling, $\Gamma$ has the form \eqref{uform}.
	\end{proof} 
	We also recall the following result.
	\begin{lemma}\label{danie:lem} Let $\bX=(X_1,X_2,\cdots,X_n)$ be a sequence of random variables, $\bx \in \Lambda, \bx \sim P$. Let $\hat{\bX}=(\hatX_1,\hatX_2,\cdots, \hatX_m)$ be a partition of this sequence, $\hatX\in \hat{\Lambda}, \hat{\bX} \sim \hatP$. Suppose that we have a Marton coupling for $\hat{\bX}$ with matrix $\Gamma$. Let $c \in \bbR_+^n$, and define $C(c)\in \bbR_+^n $ as
		\begin{align}
		C_i(c):=\sum_{j\in \calI(\hatX_i)} c_j
		\end{align} for $i\leq m$.
		If $f:\Lambda \to \bbR$ is such that
		\begin{align}
		f(\bx)-f(\by)\leq c\sum_{i=1}^n  \bone\{x_i\neq y_i\}
		\end{align} for every $\bx,\by \in \Lambda$, then for any $\lambda \in \bbR$, we have
		\begin{align}
		\bbP\bigg(\big|f(\bX)-\bbE[f(\bX)]\big|\geq t\bigg)\leq 2 \exp\bigg(-\frac{2t^2}{\|\Gamma C(c)\|^2}\bigg).
		\end{align}
	\end{lemma}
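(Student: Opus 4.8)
The plan is to recognize this as a bounded-differences (McDiarmid-type) inequality for dependent variables and to prove it by the classical Doob-martingale argument, using the Marton coupling only to control the conditional range of each martingale increment. The first step is to lift the coordinate-wise Lipschitz hypothesis to a block-wise one at the level of the partition $\hatbX=(\hatX_1,\dots,\hatX_m)$. Writing $\calI(\hatX_i)$ for the index set of the $i$-th block, I would observe that changing the entire block $\hatX_i$ while fixing all other blocks changes $f$ by at most $\sum_{j\in\calI(\hatX_i)}c_j=C_i(c)$, directly from the assumed bound $f(\bx)-f(\by)\le\sum_{i=1}^n c_i\,\bone\{x_i\neq y_i\}$. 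Thus $f$, regarded as a function of the $m$ blocks, obeys the bounded-difference property with the aggregated constants $C(c)=(C_1(c),\dots,C_m(c))$.

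Next I would set up the Doob martingale over the blocks: with $\calF_i:=\sigma(\hatX_1,\dots,\hatX_i)$ and increments $W_i:=\bbE[f\mid\calF_i]-\bbE[f\mid\calF_{i-1}]$ one has $f(\bX)-\bbE[f(\bX)]=\sum_{i=1}^m W_i$ and $\bbE[W_i\mid\calF_{i-1}]=0$. The crux — and the step I expect to be the main obstacle — is to show that, conditionally on the past, each $W_i$ lies in an interval of deterministic length at most $(\Gamma C(c))_i$. This is where the Marton coupling of Lemma~\ref{ma:lem} enters. Fixing block-values $\hatx_1,\dots,\hatx_{i-1}$ and two candidate values $\hatx_i,\hatx_i'$ for the $i$-th block, let $g_i(\hatx_i):=\bbE[f\mid\hatX_{1:i-1}=\hatx_{1:i-1},\hatX_i=\hatx_i]$. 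The coupling $(\hatbX^{(\cdots)},\bar{\hatbX}^{(\cdots)})$ realizes the two conditional laws of the future blocks simultaneously, so that
\[
g_i(\hatx_i)-g_i(\hatx_i')=\bbE\big[f(\hatbX^{(\cdots)})-f(\bar{\hatbX}^{(\cdots)})\big].
\]
Applying the block-Lipschitz bound inside the expectation and then the definition of the mixing matrix, namely $\bbP[\hatX_j^{(\cdots)}\neq\bar{\hatX}_j^{(\cdots)}]\le\Gamma_{i,j}$ with $\Gamma_{i,i}=1$ and $\Gamma_{i,j}=0$ for $j<i$, gives $|g_i(\hatx_i)-g_i(\hatx_i')|\le\sum_j\Gamma_{i,j}C_j(c)=(\Gamma C(c))_i$. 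Since this holds uniformly over the conditioning values, the conditional range of $W_i=g_i(\hatX_i)-\bbE[g_i(\hatX_i)\mid\calF_{i-1}]$ is at most $(\Gamma C(c))_i$ almost surely.

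Finally I would invoke Hoeffding's lemma conditionally: an increment confined to an interval of length $(\Gamma C(c))_i$ satisfies $\bbE[e^{\lambda W_i}\mid\calF_{i-1}]\le\exp(\lambda^2(\Gamma C(c))_i^2/8)$ for every $\lambda\in\bbR$, which is precisely where the parameter $\lambda$ in the statement appears. Telescoping these conditional moment generating functions along the martingale yields $\bbE[e^{\lambda(f-\bbE f)}]\le\exp(\lambda^2\|\Gamma C(c)\|^2/8)$, and a Chernoff bound optimized at $\lambda=4t/\|\Gamma C(c)\|^2$ produces $\bbP(f(\bX)-\bbE[f(\bX)]\ge t)\le\exp(-2t^2/\|\Gamma C(c)\|^2)$; running the same argument on $-f$ and taking a union bound supplies the factor $2$ and the two-sided claim. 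The only genuinely delicate point is the coupling estimate of the previous paragraph: one must check that the product coupling constructed in Lemma~\ref{ma:lem} does couple the two conditional laws of the future blocks so that the disagreement probabilities are dominated entrywise by $\Gamma$, which is exactly the content of the Marton coupling definition and of the reversibility/mixing structure of each chain in $\calP$.
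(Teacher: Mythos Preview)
Your argument is correct and is precisely the standard Doob-martingale/Marton-coupling proof of this McDiarmid-type inequality. However, the paper does not actually prove this lemma: it is introduced with ``We also recall the following result'' and is taken verbatim from Paulin (cited as \texttt{Daniel2015}), so there is no in-paper proof to compare against. Your sketch is essentially the proof one finds in that reference.

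One small remark: in your final paragraph you tie the validity of the coupling estimate to the specific construction of Lemma~\ref{ma:lem}. That is unnecessary here. Lemma~\ref{danie:lem} takes the existence of a Marton coupling for $\hat{\bX}$ with mixing matrix $\Gamma$ as a \emph{hypothesis}; the bound $\bbP[\hatX_j^{(\cdots)}\neq\bar{\hatX}_j^{(\cdots)}]\le\Gamma_{i,j}$ is part of the definition, not something to be verified. Lemma~\ref{ma:lem} is only used later (in Lemma~\ref{thm6key}) to instantiate this hypothesis for the mixed Markov setting.
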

	
	Now, we introduce a modified version of McDiarmid's inequality for the mixed Markov chain, which extends the McDiardmid's inequality for Markov chain in \citep{Daniel2015}, whose proof is based on Lemma \ref{ma:lem} and Lemma \ref{danie:lem}.
	\begin{lemma} \label{thm6key} Let $X_1,X_2,\cdots,X_n$ be a mixed sequence of random variable on $\Lambda:=\underbrace{\Lambda_1 \times \Lambda_2  \times \cdots \times \Lambda_n}_{n \enspace \mbox{times}}$ with the transition probability sequence $P_{i-1,i}(\cdot,\cdot), i \in [n]$. Assume that $P_{i-1,i} \in \calP$ where $\calP$ is the pool of Markov chains. Assume that the mixing time of the Markov segment $P \in \calP$ is $\tau_P(\eps)$ for any $0\leq \eps\leq 1$. Define
		\begin{align}
		\tau_{\min,P}:=\inf_{0\leq \eps\leq 1} \tau_P(\eps)\bigg(\frac{2-\eps}{1-\eps}\bigg)^2,
		\end{align} 
		and
		\begin{align}
		\tau_{\min}:= \bigg(\sum_{P\in \calP} \sqrt{\mu_P\tau_{\min,P}}\bigg)^2.
		\end{align}
		Suppose $f: \Lambda \to \bbR$ such that
		\begin{align}
		f(\bx)-f(\by)\leq c \sum_{i=1}^n  \bone\{x_i\neq y_i\}
		\end{align} for every $\bx,\by \in \Lambda$. Then, for any $t\geq 0$, it holds that
		\begin{align}
		\bbP\bigg(\big|f(\bX)- \bbE[f(\bX)]\big|\geq t\bigg)\leq 2\exp\bigg(-\frac{2t^2}{c^2 n \tau_{\min}}\bigg) \label{asque}.
		\end{align}
	\end{lemma}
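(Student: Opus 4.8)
The plan is to obtain \eqref{asque} by feeding the structure produced in Lemma~\ref{ma:lem} into the bounded-differences concentration of Lemma~\ref{danie:lem}, thereby reducing the mixed-chain estimate to $|\calP|$ single-chain estimates already available from \cite{Daniel2015}. First I would apply Lemma~\ref{ma:lem} to $\bX=(X_1,\ldots,X_n)$: this yields a partition $\hat{\bX}$, obtained by grouping the coordinates $\calT_P=\{i: X_i\text{ drawn from }P\}$ by their generating chain and then sub-partitioning each $\calT_P$ as in \cite{Daniel2015}, together with a Marton coupling whose mixing matrix is block diagonal, $\Gamma=\diag(\Gamma_P:P\in\calP)$, with each $\Gamma_P$ of the geometrically decaying form \eqref{G1}. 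Since $f$ satisfies the bounded-difference condition with the common constant $c$, Lemma~\ref{danie:lem} applies directly and gives
\[
\bbP\bigl(|f(\bX)-\bbE[f(\bX)]|\geq t\bigr)\leq 2\exp\biggl(-\frac{2t^2}{\|\Gamma C(c)\|^2}\biggr),
\]
so the entire problem collapses to an upper bound on $\|\Gamma C(c)\|^2$.

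The heart of the argument is that estimate. Because $\Gamma$ is block diagonal, the weight vector $C(c)$ splits into per-chain sub-vectors $C_P(c)$ (those indexed by the sub-partition elements of $\calT_P$), and $\|\Gamma C(c)\|^2=\sum_{P\in\calP}\|\Gamma_P C_P(c)\|^2$. For each fixed $P$ the coordinates in $\calT_P$ form a genuine order-$1$ Markov chain of length $n_P=\mu_P n$, and the pair $(\Gamma_P,C_P(c))$ is exactly the mixing-matrix/weight pair that arises in the single-chain McDiarmid inequality of \cite{Daniel2015}; reading off the denominator of that inequality gives $\|\Gamma_P C_P(c)\|^2\leq c^2 n_P\,\tau_{\min,P}$, the $\eps$-optimization (including the factor $(\tfrac{2-\eps}{1-\eps})^2$) being already absorbed into $\tau_{\min,P}$. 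I would then aggregate across chains by the elementary inequality $\|\cdot\|_2\leq\|\cdot\|_1$ applied to the vector $(\|\Gamma_P C_P(c)\|)_{P\in\calP}$:
\[
\|\Gamma C(c)\|=\sqrt{\sum_{P\in\calP}\|\Gamma_P C_P(c)\|^2}\leq \sum_{P\in\calP}\|\Gamma_P C_P(c)\|\leq c\sqrt{n}\sum_{P\in\calP}\sqrt{\mu_P\,\tau_{\min,P}},
\]
so that $\|\Gamma C(c)\|^2\leq c^2 n\bigl(\sum_{P}\sqrt{\mu_P\tau_{\min,P}}\bigr)^2=c^2 n\,\tau_{\min}$. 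Substituting into the previous display produces exactly \eqref{asque}.

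The main obstacle, and the only genuinely new step beyond citing \cite{Daniel2015}, is this aggregation producing the squared-sum shape of $\tau_{\min}$. The honest block-diagonal identity alone yields only $\sum_P\mu_P\tau_{\min,P}$ in the denominator, which is smaller; the deliberate passage from $\ell_2$ to $\ell_1$ sacrifices a bounded constant factor in exchange for the freedom to optimize $\eps$ \emph{separately} in each chain, which is precisely what legitimizes using the individual $\tau_{\min,P}$ rather than a single joint $\eps$ tied to the common base appearing in \eqref{G1}. Two points I would check carefully. First, that Lemma~\ref{ma:lem} truly delivers an exactly block-diagonal $\Gamma$ as in \eqref{uform}, i.e.\ that coordinates drawn from distinct chains are coupled so that the off-diagonal blocks vanish, since the clean split $\|\Gamma C(c)\|^2=\sum_P\|\Gamma_P C_P(c)\|^2$ rests on this. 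Second, the mild abuse of treating $n_P=\mu_P n$ as deterministic, which matches the matrix dimensions posited in \eqref{G1}; if the per-chain counts are to be handled as random, I would condition on the assignment of samples to chains before invoking the single-chain bound and note that the estimate is uniform in that assignment, so the conditional bound transfers to the unconditional one.
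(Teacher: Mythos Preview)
Your proposal is correct and follows essentially the same route as the paper: invoke the block-diagonal Marton coupling of Lemma~\ref{ma:lem}, feed it into Lemma~\ref{danie:lem}, bound each block $\|\Gamma_P C_P(c)\|$ by the single-chain estimate $c\sqrt{n_P\,\tau_{\min,P}}$ from \cite{Daniel2015}, and aggregate via the $\ell_2\leq\ell_1$ step to obtain $\|\Gamma C(c)\|\leq c\sqrt{n\,\tau_{\min}}$. Your write-up is in fact more explicit than the paper's terse four-line proof, and your observations about the $\ell_2\to\ell_1$ passage enabling per-chain optimization of $\eps$, and about the deterministic treatment of $n_P=\mu_P n$, are exactly the points one should flag.
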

	\begin{proof} 
		For the mixed dataset setting, we have
		\begin{align}
		\Gamma C(c)&= \sum_{P\in \calP} \Gamma C(c)_P\\
		&\leq \sum_{P \in \calP} \|c\|_P \sqrt{\tau_{\min,P}}\\
		&= c \sum_{P \in \calP} \sqrt{n  \mu_P \tau_{\min,P}}\\
		&= c \sqrt{n \tau_{\min}}.
		\end{align}
	\end{proof}
	A variant of Lemma \ref{thm6key} for both revertible and non-revertible Markov chains may be developed based on the spectral method in functional analysis. See Section \ref{beistein:sm} in the Supplement Material for our development of a new Beinstein inequality based on this method and our introduction of our generalized concept ``aggregated pseudo spectral gap". 
	
	Next, the following symmetrization inequality can be proved based on \cite[Lemma 1]{Truong2022GEB}. See Appendix \ref{lem:keymodxproof} for a proof for this fact.
	\begin{lemma}\label{lem:keymodx} Let $\calF$ be a class of functions such that $\|f\|_{\infty}\leq M$ for some $M\in \bbR_+$. Define
		\begin{align}
		A_n:=\max_{P \in \calP}\sqrt{\frac{2M}{n(1-\lambda_P)}+ \frac{64 M^2}{n^2(1-\lambda_P)^2} \bigg\|\frac{dv}{d\pi}-1\bigg\|_2} \label{defAtnmod},
		\end{align}	 where $\lambda_P:=1-\gamma_P^*$ and $\gamma_P^*$ is the absolute spectral-gap of the Markov segment $P \in \calP$. 
		Then, for all $n\in \bbZ^+$, the following holds:
		\begin{align}
		\bbE\big[\big\|P_n-P\big\|_{\calF}\big]  \leq 2\bbE\big[\|P_n^0\|_{\calF}\big]+A_n   \label{F1eqmodx},
		\end{align} where $\|P_n^0\|_{\calF}:=\sup_{f \in \calF} \big|\frac{1}{n}\sum_{i=1}^n \eps_i f(X_i)\big|$.
	\end{lemma}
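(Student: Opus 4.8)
The plan is to reduce the mixed-dataset symmetrization to the single-chain symmetrization established in \cite[Lemma 1]{Truong2022GEB}, and then to aggregate the per-chain estimates over the pool $\calP$. Throughout I write $\|P_n-P\|_{\calF}=\sup_{f\in\calF}|P_nf-Pf|$, and I use the decomposition
\begin{align}
P_nf-Pf=\big(P_nf-\bbE[P_nf]\big)+\big(\bbE[P_nf]-Pf\big),\nn
\end{align}
so that the analysis splits into a centered (fluctuation) part and a deterministic bias part. The bias part is nonzero here, unlike in the classical \iid symmetrization, precisely because each Markov segment is started from a non-stationary initial law $v$ rather than from its stationary distribution $\pi_P$; this is the source of the additive correction $A_n$ and the reason a single Rademacher term cannot suffice on its own.

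First I would treat the fluctuation term by the ghost-sample argument. Introducing a decoupled copy $\{X_i'\}$ of the mixed sequence with the same law and independent of $\{X_i\}$, Jensen's inequality gives $\bbE\big[\sup_{f}|P_nf-\bbE[P_nf]|\big]\le \bbE\big[\sup_{f}|P_nf-P_n'f|\big]$, where $P_n'f=n^{-1}\sum_i f(X_i')$, and inserting \iid Rademacher signs $\eps_i$ (independent of both samples) yields the factor-two Rademacher term $2\bbE[\|P_n^0\|_{\calF}]$. The point where the Markov dependence enters is that the sign flips are justified segment-wise: conditioned on the segment assignment, the restriction of $\{X_i\}$ to the indices drawn from a fixed chain $P$ is the Markov chain $v_P$ of the model, so the swap of $(X_{P,k},X_{P,k}')$ is handled exactly as in the single-chain lemma of \cite{Truong2022GEB} and contributes no extra term beyond the Rademacher complexity.

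Next I would bound the bias term. For a fixed chain $P$ with transition operator $\bQ_P$, absolute spectral gap $\gamma_P^*=1-\lambda_P$, and initial law $v$, the stationarity identity $\langle \bQ_P^{\,k-1}f-\pi_P f,\,1\rangle_{\pi_P}=0$ together with the $L_2$ contraction $\|\bQ_P^{\,k-1}f-\pi_Pf\|_{2,\pi_P}\le \lambda_P^{\,k-1}\|f\|_{2,\pi_P}$ yields $\big|\bbE[f(X_{P,k})]-\pi_Pf\big|\le M\lambda_P^{\,k-1}\big\|\tfrac{dv}{d\pi_P}-1\big\|_2$, which controls the deviation of each marginal from stationarity. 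Summing the geometric series in $k$ and combining the resulting bias with the Markov variance bound (which supplies the $\tfrac{2M}{n(1-\lambda_P)}$ term and is controlled through $\lambda_P$ exactly as in \cite[Lemma 1]{Truong2022GEB}) reproduces, for the chain $P$, the radicand defining $A_n$. Taking the worst chain over $P\in\calP$ then yields the stated $\max_{P\in\calP}$ form, since the correction for every segment is dominated by the segment with the smallest spectral gap and largest initial deviation.

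The main obstacle is the fluctuation/bias split for dependent data: unlike the \iid case, the ghost-sample copy is not coordinatewise exchangeable with the data, so the Rademacher symmetrization cannot be applied to individual indices but only within each Markov segment, and the residual left after symmetrization must be \emph{absorbed} into $A_n$ via the spectral-gap variance bound rather than simply discarded. Making this absorption quantitative — showing that the segment-wise residual is exactly of order $\sqrt{2M/(n(1-\lambda_P))+64M^2\|dv/d\pi-1\|_2/(n^2(1-\lambda_P)^2)}$ and that the aggregation over $\calP$ does not accumulate more than the maximum over segments — is the technical heart of the argument, and it is precisely here that \cite[Lemma 1]{Truong2022GEB} is invoked.
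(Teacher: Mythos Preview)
Your high-level decomposition into a centered fluctuation term and a deterministic bias term is exactly the split the paper uses, and the ghost-sample step for the fluctuation is also the right move. However, you misidentify where the correction $A_n$ actually comes from, and this leads you to describe a mechanism (a ``residual left after symmetrization'' that must be ``absorbed into $A_n$'') that does not exist in the argument.

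In the paper's proof there is \emph{no residual} from the symmetrization. Once the ghost sample $\bY$ is introduced and Jensen is applied, the equality
\[
\bbE_{\bX,\bY}\Bigl[\sup_{f}\Bigl|\tfrac{1}{n}\sum_i\bigl(f(X_i)-f(Y_i)\bigr)\Bigr|\Bigr]
=\bbE_{\bepsilon}\bbE_{\bX,\bY}\Bigl[\sup_{f}\Bigl|\tfrac{1}{n}\sum_i\eps_i\bigl(f(X_i)-f(Y_i)\bigr)\Bigr|\Bigr]
\]
holds exactly, by the global exchangeability of $(\bX,\bY)$ with $(\bY,\bX)$ (this is Lemma~\ref{truonglem}, i.e.\ \cite[Lemma~19]{Truong2022GEB}). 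No segment-wise conditioning is needed, and the triangle inequality then gives the clean $2\bbE[\|P_n^0\|_{\calF}]$ with nothing left over. So your explanation that ``the sign flips are justified segment-wise'' and that the Markov structure forces an extra term here is off target.

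Correspondingly, the \emph{entire} $A_n$ term comes from the deterministic bias $|\bbE[P_nf]-Pf|$, and it comes from a single application of Rudolf's mean-squared-error bound (Lemma~\ref{lem:berryessen}) together with Jensen in the form $|\bbE Z|\le\sqrt{\bbE Z^2}$. Splitting the bias over the segments $\calT_P$ and applying that MSE bound to each segment's empirical average yields the full radicand $\tfrac{2M}{n(1-\lambda_P)}+\tfrac{64M^2}{n^2(1-\lambda_P)^2}\|dv/d\pi-1\|_2$ in one stroke; both summands are pieces of a single $L_2$ estimate, not a ``geometric-series bias'' combined with a separate ``Markov variance bound.'' Your direct pointwise bound $|\bbE[f(X_{P,k})]-\pi_Pf|\le M\lambda_P^{k-1}\|dv/d\pi_P-1\|_2$ is correct in isolation and would in fact give a sharper control of the pure bias, but it does not by itself produce the $\tfrac{2M}{n(1-\lambda_P)}$ term, and there is no other place in the decomposition for that term to appear. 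The stated form of $A_n$ is recovered only through the Rudolf MSE route.
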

	The following generalization bound is an extension of \citep[Theorem 2]{Truong2022GEB}. See a detailed proof in the Supplement Material.
	\begin{proposition} \label{akey} Recall the definition of the mixed sequence $X_1,X_2,\cdots,X_n$ in Section \ref{sec:background}. Assume that the Markov chain segment $v_P$ has the stationary distribution $\pi_P$ for all $P \in \calP$ and $X_1 \sim \nu$ for some probability measure $\nu$ in $\calS$ such that 
		$\nu<< \pi_P$ for all $P \in \calP$. Let $\varphi$ is a non-increasing function such that $\varphi(x)\geq \bone_{(-\infty,0]}$ for all $x \in \bbR$. For any $n \in \bbZ_+$, define
		\begin{align}
		B_n:=\max_{P \in \calP}\sqrt{\frac{2}{n(1-\lambda_P)}+ \frac{64 }{n^2(1-\lambda_P)^2} \bigg\|\frac{dv}{d\pi}-1\bigg\|_2} \label{defBtnmod}.
		\end{align}
		Then, for any $t>0$, 
		\begin{align}
		&\bbP\bigg(\exists f \in \calF: P\{f\leq 0\}\nn\\
		&\qquad > \inf_{\delta \in (0,1]}\bigg[P_n\varphi\bigg(\frac{f}{\delta}\bigg)+\frac{8L(\varphi)}{\delta}R_n(\calF)\nn\\
		&\qquad \qquad  + \bigg(t+\log \log 2 \delta^{-1}\bigg)\sqrt{\frac{\tau_{\min}}{n}}+B_n\bigg]\bigg)\nn\\
		&\qquad \qquad \leq \frac{\pi^2}{3}\exp\big(-2t^2\big).
		\end{align}
		Especially, with probability at least $1-\alpha$, it holds for any $\delta \in (0,1]$ that
		\begin{align}
		&P\{f\leq 0\}\leq \inf_{\delta \in (0,1]}\bigg[P_n\varphi\bigg(\frac{f}{\delta}\bigg)+\frac{8L(\varphi)}{\delta}R_n(\calF) \nn\\
		&\qquad + \bigg(\sqrt{\frac{1}{2}\ln \frac{\pi^2}{3\alpha}}+\log \log 2 \delta^{-1}\bigg)\sqrt{\frac{\tau_{\min}}{n}}+B_n\bigg].
		\end{align}
	\end{proposition}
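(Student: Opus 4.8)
The plan is to follow the Koltchinskii--Panchenko margin machinery used in \citep[Theorem~2]{Truong2022GEB}, replacing its single-chain concentration and symmetrization steps by the mixed-dataset versions Lemma~\ref{thm6key} and Lemma~\ref{lem:keymodx}; this is exactly what injects $\tau_{\min}$ and $B_n$ into the bound. First I would reduce the zero--one error to a bounded surrogate. Since $\varphi$ is non-increasing with $\varphi\ge\bone_{(-\infty,0]}$, the truncation $\bar{\varphi}:=\min\{1,\varphi\}$ is still non-increasing, still dominates $\bone_{(-\infty,0]}$, is $L(\varphi)$-Lipschitz, and takes values in $[0,1]$. Hence for every $\delta\in(0,1]$ and every $f\in\calF$ one has pointwise $\bone\{f\le0\}\le\bar{\varphi}(f/\delta)\le\varphi(f/\delta)$, so $P\{f\le0\}\le P\bar{\varphi}(f/\delta)$ while $P_n\bar{\varphi}(f/\delta)\le P_n\varphi(f/\delta)$. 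It therefore suffices to control, for each fixed $\delta$, the empirical process
\begin{align}
\Phi_\delta:=\sup_{f\in\calF}\big[P\bar{\varphi}(f/\delta)-P_n\bar{\varphi}(f/\delta)\big],
\end{align}
and to re-insert $P_n\varphi$ on the right only at the very end.

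For the expectation I would apply the mixed-dataset symmetrization inequality (Lemma~\ref{lem:keymodx}) to the class $\bar{\calG}_\delta:=\{\bar{\varphi}(f/\delta):f\in\calF\}$, which is uniformly bounded by $M=1$; this is precisely the choice turning the correction $A_n$ of \eqref{defAtnmod} into the $B_n$ of \eqref{defBtnmod}, so $\bbE[\Phi_\delta]\le 2\bbE[\|P_n^0\|_{\bar{\calG}_\delta}]+B_n=2R_n(\bar{\calG}_\delta)+B_n$. Because $t\mapsto\bar{\varphi}(t/\delta)$ is $(L(\varphi)/\delta)$-Lipschitz, the Ledoux--Talagrand contraction inequality, after centering $\bar{\varphi}$ so that it vanishes at the origin, gives $R_n(\bar{\calG}_\delta)\le\tfrac{2L(\varphi)}{\delta}R_n(\calF)$; carrying all constants through the symmetrization and contraction steps as in \citep{Truong2022GEB} yields
\begin{align}
\bbE[\Phi_\delta]\le\frac{8L(\varphi)}{\delta}R_n(\calF)+B_n.
\end{align}

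Next I would concentrate $\Phi_\delta$ around its mean. Viewed as a function of $(X_1,\dots,X_n)$, replacing one coordinate changes $P_n\bar{\varphi}(f/\delta)$, and hence $\Phi_\delta$, by at most $1/n$ since $\bar{\varphi}\in[0,1]$; thus $\Phi_\delta$ meets the bounded-difference hypothesis of Lemma~\ref{thm6key} with $c=1/n$, and that lemma gives $\bbP(\Phi_\delta\ge\bbE[\Phi_\delta]+s)\le 2\exp(-2s^2 n/\tau_{\min})$. Choosing $s=t\sqrt{\tau_{\min}/n}$ produces both the tail $2\exp(-2t^2)$ and the deviation scale $\sqrt{\tau_{\min}/n}$ of the statement, and combining with the previous display shows that, off an event of probability at most $2\exp(-2s^2 n/\tau_{\min})$, for all $f\in\calF$ simultaneously
\begin{align}
P\{f\le0\}\le P_n\varphi(f/\delta)+\frac{8L(\varphi)}{\delta}R_n(\calF)+s+B_n.
\end{align}

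Finally I would make the bound uniform in $\delta$ by peeling over the dyadic grid $\delta_k=2^{-k}$, $k\ge1$, applying the fixed-$\delta$ bound at each $\delta_k$ with deviation $s_k=(t+\log\log 2\delta_k^{-1})\sqrt{\tau_{\min}/n}$ chosen so that the level-$k$ failure probability $2\exp(-2s_k^2 n/\tau_{\min})$ is at most $\tfrac{2}{k^2}\exp(-2t^2)$; summing $\sum_{k\ge1}\tfrac{2}{k^2}=\tfrac{\pi^2}{3}$ then produces the prefactor $\tfrac{\pi^2}{3}\exp(-2t^2)$, and the $k$-dependence of $s_k$ is exactly what manifests as the slowly varying $\log\log 2\delta^{-1}$ correction. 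For an arbitrary $\delta\in(0,1]$ one locates the enclosing dyadic block and transfers the grid bound using the monotonicity and Lipschitz continuity of $\delta\mapsto\bar{\varphi}(\cdot/\delta)$ together with the fact that $\log\log 2\delta^{-1}$ is essentially constant across a block; carrying out this extension from grid points to all $\delta$ so that no extra $\delta$-dependent factors leak into the three explicit terms is the step I expect to be the main obstacle, and it is where the $\log\log 2\delta^{-1}$ term is genuinely required. Taking $\inf_{\delta\in(0,1]}$ then gives the displayed high-probability statement, and the ``with probability $\ge 1-\alpha$'' form follows by setting $\tfrac{\pi^2}{3}\exp(-2t^2)=\alpha$, i.e.\ $t=\sqrt{\tfrac12\ln\tfrac{\pi^2}{3\alpha}}$, and substituting.
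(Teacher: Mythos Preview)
Your proposal is correct and follows essentially the same route as the paper: truncate $\varphi$ to $[0,1]$, bound the expectation of the empirical process via Lemma~\ref{lem:keymodx} together with Ledoux--Talagrand contraction (this produces $B_n$ and the $8L(\varphi)/\delta$ factor), concentrate via the mixed-chain McDiarmid inequality Lemma~\ref{thm6key} with $c=1/n$ (this produces $\sqrt{\tau_{\min}/n}$), and then peel over the dyadic grid $\delta_k=2^{-k}$ with a union bound, using $\sum_{k\ge1}k^{-2}=\pi^2/6$ to get the prefactor $\pi^2/3$.

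The one place where the paper is more concrete is precisely the step you flag as the main obstacle. To pass from the grid to an arbitrary $\delta\in(\delta_k,\delta_{k-1}]$, the paper does \emph{not} apply the fixed-$\delta$ bound at $\bar\varphi(\cdot/\delta_k)$ and then argue by monotonicity; instead it applies the fixed-margin bound to the auxiliary function
\[
\varphi_k(x):=\varphi(x/\delta_k)\bone\{x\ge0\}+\varphi(x/\delta_{k-1})\bone\{x<0\}.
\]
Because $\varphi$ is non-increasing, this asymmetric construction yields $\varphi_k(f(X_i))\le\varphi(f(X_i)/\delta)$ for \emph{both} signs of $f(X_i)$, so $P_n\varphi_k(f)\le P_n\varphi(f/\delta)$ directly, while $L(\varphi_k)\le L(\varphi)/\delta_k\le 2L(\varphi)/\delta$ handles the complexity term. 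Your suggested transfer via $\bar\varphi(\cdot/\delta_k)$ alone would run into trouble on points with $f(X_i)<0$ (there $\bar\varphi(f(X_i)/\delta_k)\ge\bar\varphi(f(X_i)/\delta)$, the wrong direction), which is exactly why the two-sided $\varphi_k$ is introduced. Apart from this device, your sketch and the paper's proof coincide.
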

	In addition, by combining \citep[Theorem 7]{Hussain2011} and \citep[Lemma 22]{Bartlett2002jmlr}, the following bound on the Rademacher complexity function is achieved.
	\begin{lemma} \label{asto1} For any $\alpha \in (0,1)$, with probability at least $1-\alpha/2$, the empirical Rademacher complexity $R_n(\calH_{\calK})$ of the class $R_n(\calH_{\calK})$ satisfies
		\begin{align}
		R_n(\calH_{\calK})\leq  \frac{2B\kappa}{\sqrt{n}}+ 8B\kappa \sqrt{\frac{\log(2(m+1)/\alpha)}{2n}}. 
		\end{align}
	\end{lemma}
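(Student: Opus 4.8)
The plan is to reduce the supremum defining the (empirical) Rademacher complexity of the union class $\calH_{\calK}=\bigcup_{K\in\calK}\calH_K$ to a maximum over the $m$ base kernels, and then control that maximum by concentration over the Rademacher signs. First I would fix the data $X_1,\dots,X_n$ and, for each $K\in\calK$ and each $\bw$ with $\|\bw\|_K\le B$, evaluate the linear functional $\frac1n\sum_i\eps_i\langle\bw,\bPhi_K(X_i)\rangle_K$. Taking the supremum over the RKHS ball via Cauchy--Schwarz (the representer argument) gives
\begin{align}
\sup_{f\in\calH_K}\bigg|\frac1n\sum_{i=1}^n\eps_i f(X_i)\bigg|=\frac{B}{n}\bigg\|\sum_{i=1}^n\eps_i\bPhi_K(X_i)\bigg\|_K=\frac{B}{n}\sqrt{\eps^\top\hatK\eps},
\end{align}
where $\hatK=(K(X_i,X_j))_{i,j\le n}$ is the Gram matrix. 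For the convex-combination family $\calK=\calK_m^1(K_1,\dots,K_m)$ one has $\hatK=\sum_{l=1}^m\eta_l\hatK_l$, so $\eps^\top\hatK\eps=\sum_l\eta_l\,\eps^\top\hatK_l\eps$ is linear in $\eta$ over the simplex and its supremum is attained at a vertex; hence
\begin{align}
\sup_{f\in\calH_{\calK}}\bigg|\frac1n\sum_{i=1}^n\eps_i f(X_i)\bigg|=\frac{B}{n}\max_{l\in[m]}\sqrt{\eps^\top\hatK_l\eps}.
\end{align}
This reduction is the structural content underlying \citep[Theorem 7]{Hussain2011}.

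Next I would control each term $Z_l:=\sqrt{\eps^\top\hatK_l\eps}=\big\|\sum_i\eps_i\bPhi_{K_l}(X_i)\big\|_{K_l}$ in two ways. By Jensen's inequality, $\bbE_\eps[Z_l]\le\sqrt{\bbE_\eps[\eps^\top\hatK_l\eps]}=\sqrt{\tr\hatK_l}$, and since $\tr\hatK_l=\sum_iK_l(X_i,X_i)\le n\kappa^2$ we obtain $\bbE_\eps[Z_l]\le\kappa\sqrt n$; this is the single-kernel trace bound of \citep[Lemma 22]{Bartlett2002jmlr}, which supplies the leading term $\tfrac{2B\kappa}{\sqrt n}$. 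Separately, $Z_l$ is a function of the Rademacher vector with bounded differences: flipping one coordinate $\eps_i$ changes $\sum_i\eps_i\bPhi_{K_l}(X_i)$ by $2\bPhi_{K_l}(X_i)$, hence changes $Z_l$ by at most $2\|\bPhi_{K_l}(X_i)\|_{K_l}=2\sqrt{K_l(X_i,X_i)}\le 2\kappa$.

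Then I would apply McDiarmid's inequality to each $Z_l$ over the draw of $\eps$, giving $\bbP_\eps(Z_l\ge\bbE_\eps[Z_l]+s)\le\exp(-s^2/(2n\kappa^2))$, and take a union bound over $l\in[m]$ with per-kernel failure probability $\alpha/(2(m+1))$. On the complementary event, of probability at least $1-\alpha/2$, every $Z_l$ (and hence their maximum) is below $\kappa\sqrt n+\kappa\sqrt{2n\log(2(m+1)/\alpha)}$ simultaneously. Multiplying by $B/n$ and collecting the two contributions yields
\begin{align}
R_n(\calH_{\calK})\le\frac{2B\kappa}{\sqrt n}+8B\kappa\sqrt{\frac{\log(2(m+1)/\alpha)}{2n}},
\end{align}
which is the claimed bound; the residual slack in the constants (the factor $8$ and the factor $2$ in the first term) is absorbed from the factor-$2$ form of the trace bound and from the bounded-difference constant.

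The step I expect to be the main obstacle is the concentration/union argument: one must verify that the maximum over the $m$ base kernels costs only $\sqrt{\log m}$ rather than $\sqrt m$, which hinges on the bounded-difference constant of each $Z_l$ being controlled uniformly by $\kappa$ and on splitting the failure budget $\alpha/2$ across the $m$ kernels (the ``$+1$'' providing the needed slack). Making the collapse of the infinite family $\calK$ to a finite maximum rigorous—i.e.\ justifying attainment at a simplex vertex—is the other delicate point; for a general $L_q$ constraint this step instead yields the dual $q'$-norm of $(\eps^\top\hatK_l\eps)_l$ and a correspondingly different combinatorial factor, which is why the $\sqrt{\log m}$ rate is specific to the convex ($q=1$) case treated here.
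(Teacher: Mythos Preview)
Your proposal is correct and matches the paper's route: the paper does not supply its own argument but simply cites Hussain--Shawe-Taylor (2011, Theorem~7) for the simplex-vertex reduction and the McDiarmid-plus-union-bound step over the $m$ base kernels, together with Bartlett--Mendelson (2002, Lemma~22) for the single-kernel trace bound, which is exactly the chain you wrote out. The only slip is terminological: what your argument controls with probability $1-\alpha/2$ over the Rademacher draw is the random supremum $\tfrac{B}{n}\max_{l}Z_l$, not its $\eps$-expectation that the paper's notation $R_n(\calH_{\calK})$ formally denotes, but the lemma statement is itself loose on this point and the downstream use in Theorem~\ref{thm1} tolerates either reading.
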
 
	
	From Proposition \ref{akey} and  Lemma \ref{asto1}, a novel bound on the estimation error of MKL algorithms for the mixed dataset is derived.
	\begin{theorem} \label{thm1} Then, for any $\alpha\in (0,1)$, with probability at least $1-\alpha$, it holds that 
		\begin{align}
		&\calE_{\delta}(f)\leq \frac{8}{\delta} \bigg(\frac{2B\kappa}{\sqrt{n}}+ 8B\kappa \sqrt{\frac{\log(2(m+1)/\alpha)}{2n}}\bigg)\nn\\
		&\quad \quad +  \bigg(\sqrt{\frac{1}{2}\ln \frac{2\pi^2}{3\alpha}}+\log \log 2 \delta^{-1}\bigg)\sqrt{\frac{\tau_{\min}}{n}}+B_n \label{c}
		\end{align} for any $f \in \calK, \delta \in (0,1]$  and $m>1$.
	\end{theorem}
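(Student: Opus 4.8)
The plan is to derive Theorem~\ref{thm1} as a direct corollary of Proposition~\ref{akey} and Lemma~\ref{asto1}, stitched together by a union bound over two confidence events. First I would instantiate Proposition~\ref{akey} on the \emph{margin class}: I treat the mixed Markov sequence as $Z_i=(X_i,Y_i)$ (which is exactly the chain $v_P$ of Section~\ref{sec:background}), apply the proposition to the class $\widetilde{\calF}=\{(x,y)\mapsto yf(x):f\in\calH_{\calK}\}$, and take the margin function $g(x,y)=yf(x)$. Under this identification $P\{g\leq 0\}=\sum_{P\in\calP}\mu_P\,\bbP_{(X,Y)\sim\pi_P}[Yf(X)\leq 0]=R(f)$, so the left-hand side of the proposition is precisely the generalization error.

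For the empirical term I would choose $\varphi$ to be the truncated ramp $\varphi(t)=\min\{1,(1-t)_+\}$. This $\varphi$ is non-increasing and satisfies $\varphi(t)\geq\bone_{(-\infty,0]}(t)$, so it meets the hypothesis of Proposition~\ref{akey}; moreover $\varphi(t/\delta)\leq\bone_{t<\delta}$, which yields $P_n\varphi(g/\delta)\leq \frac{1}{n}\sum_{i=1}^n\bone_{Y_if(X_i)<\delta}=\hatR_{\delta}(f)$ termwise. Two bookkeeping facts enter here: $\varphi$ is $1$-Lipschitz, so $L(\varphi)=1$ and the Rademacher coefficient collapses to $8/\delta$; and $\varphi$ takes values in $[0,1]$, so $M=1$ and the residual term is exactly the $B_n$ of \eqref{defBtnmod}. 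Substituting into the proposition at confidence $\alpha/2$ and rearranging $R(f)-\hatR_{\delta}(f)$ gives, on a first event of failure probability $\alpha/2$,
\begin{align}
\calE_{\delta}(f)\leq \frac{8}{\delta}R_n(\widetilde{\calF})+\bigg(\sqrt{\tfrac{1}{2}\ln\tfrac{2\pi^2}{3\alpha}}+\log\log 2\delta^{-1}\bigg)\sqrt{\frac{\tau_{\min}}{n}}+B_n,\nn
\end{align}
where the $2\pi^2$ inside the logarithm is produced by running the proposition at $\alpha/2$ rather than $\alpha$, and the $\inf_{\delta}$ of the proposition is dropped since a bound on the infimum bounds each fixed $\delta$.

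The next step is to replace $R_n(\widetilde{\calF})$ by $R_n(\calH_{\calK})$ and invoke Lemma~\ref{asto1}. The identity $R_n(\widetilde{\calF})=R_n(\calH_{\calK})$ holds because, conditionally on $(X_i,Y_i)$, the products $\eps_iY_i$ are again i.i.d.\ Rademacher variables (each $\eps_i$ is Rademacher and independent of $(X_i,Y_i)$, and $Y_i\in\{\pm1\}$), so $\bbE_\eps\sup_f|n^{-1}\sum_i\eps_iY_if(X_i)|=\bbE_{\eps'}\sup_f|n^{-1}\sum_i\eps'_if(X_i)|$ and the outer expectation over $Y$ drops out. Lemma~\ref{asto1}, applied on a second event of failure probability $\alpha/2$, then bounds $R_n(\calH_{\calK})$ by $\tfrac{2B\kappa}{\sqrt n}+8B\kappa\sqrt{\log(2(m+1)/\alpha)/(2n)}$.

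Finally I would take a union bound: both events hold simultaneously with probability at least $1-\alpha$, and on their intersection the two displayed bounds compose to give exactly \eqref{c}. I expect the only genuinely delicate point to be the surrogate sandwich of the second paragraph---checking that the single choice $\varphi(t)=\min\{1,(1-t)_+\}$ simultaneously (i) dominates $\bone_{(-\infty,0]}$ as the proposition demands, (ii) is dominated by $\bone_{t<\delta}$ after rescaling so the empirical term collapses to $\hatR_{\delta}(f)$, and (iii) is $1$-Lipschitz with range $[0,1]$ so that $L(\varphi)=1$ and $M=1$ fall out---since everything else is a mechanical substitution and a union bound.
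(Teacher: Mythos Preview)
Your proposal is correct and follows essentially the same route as the paper: instantiate Proposition~\ref{akey} on the margin class $\{(x,y)\mapsto yf(x)\}$ with the truncated ramp $\varphi(t)=\min\{1,(1-t)_+\}$ at confidence $\alpha/2$, collapse $P_n\varphi(g/\delta)$ to $\hatR_\delta(f)$, identify $R_n(\widetilde\calF)=R_n(\calH_{\calK})$, and then combine with Lemma~\ref{asto1} via a union bound. The paper additionally records the bound $\|f\|_\infty\le B\kappa$ via Cauchy--Schwarz before invoking the proposition, but otherwise the argument and the bookkeeping of constants are the same as yours.
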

	The bound \eqref{c} admits $O(\sqrt{\log m})$ dependency on the number of base kernels and $O(1/\sqrt{n})$ dependency on the number of training samples as the best generalization bounds for multi-kernel learning with i.i.d. datasets \citep{Hussain2011}. The term $B_n$ represents the effect of data structures on the generalization error (see \citep{Truong2022GEB} for detailed discussions). 
	\begin{proof}[Proof of Theorem \ref{thm1}]
		By Cauchy-Schwartz inequality,  for all $f \in \calH_{\calK}$, there exists some $K \in \calK$ such that
		\begin{align}
		|f(x)| &=\langle \bw, \bPhi(x)\rangle_K\\
		&\leq \|\bw\|_K \|\bPhi(x)\|_K\\
		&= \|\bw\|_K \sqrt{ K(x,x)}\\
		&\leq B \kappa, \quad \forall x \in \calX.
		\end{align}
		Hence, it holds that
		\begin{align}
		\|f\|_{\infty} \leq B \kappa \qquad \forall f \in \calH_{\calK} \label{eq36}.
		\end{align} 
		On the other hand, since each sub-sequence $(X_{P,1},Y_{P,1})-(X_{P,2},Y_{P,2}),\cdots, (X_{P,n},Y_{P,n}),\cdots$ forms a Markov chain with stationary distribution $\pi_P$, we have
		\begin{align}
		P(Yf(X)\leq 0)&=\bbE\big[\bone_{Yf(X)\leq 0}\big]\\
		&=\sum_{P \in \calP}\mu_P \bbE_{\pi_P}\big[\bone_{Yf(X)\leq 0}\big]\\
		%&=\sum_{P \in \calP} \mu_P \bbP_{(X,Y)\sim \pi_P}\big[Yf(X)\leq 0\big]\\
		&=R(f).
		\end{align}	
		Hence, by applying Proposition \ref{akey}  with $\tilf(x,y):=yf(x)\in \calF:=\pm \calH_{\calK}$ $(M=B\kappa)$ and $\varphi(x)=\min(1,(1-x)_+)\geq \bone_{(-\infty,0]}$, from \eqref{eq36}, it holds that 
		\begin{align}
		R(f) &\leq \frac{1}{n}\sum_{i=1}^n \min(1,[1-Y_if(X_i)/\delta]_+)+ \frac{8}{\delta}\calR_n(\calH_{\calK})\nn\\
		&\qquad + \bigg(\sqrt{\frac{1}{2}\ln \frac{2\pi^2}{3\alpha}}+\log \log 2 \delta^{-1}\bigg)\sqrt{\frac{\tau_{\min}}{n}}+B_n\\
		%&\leq \frac{1}{n}\sum_{i=1}^n \min(1,[1-Y_if(X_i)/\delta]_+)+ \frac{8}{\delta}\calR_n(\calH_{\calK})\nn\\
		%&\qquad + \bigg(\sqrt{\frac{1}{2}\ln \frac{2\pi^2}{3\alpha}}+\log \log 2 \delta^{-1}\bigg)\sqrt{\frac{\tau_{\min}}{n}}+B_n\\
		&\leq \hatR_{\delta}(f)+ \frac{8}{\delta}\calR_n(\calH_{\calK})+  \bigg(\sqrt{\frac{1}{2}\ln \frac{2\pi^2}{3\alpha}}\nn\\
		&\qquad +\log \log 2 \delta^{-1}\bigg)\sqrt{\frac{\tau_{\min}}{n}}+B_n \label{amat}
		\end{align} with probability at least $1-\alpha/2$.
		
		From \eqref{amat} and Lemma \ref{asto1} and the union bound, we obtain \eqref{c}, which concludes our proof of Theorem \ref{thm1}.
	\end{proof}
	Next, we introduce a novel bound based on pseudo-dimension of the kernel family. First, recall the following definition of pseudo-dimension \citep{Nathan06}.
	\begin{definition} Let $\calK=\{K: \calX \times \calX \to \bbR\}$ be a kernel family. The class $\calK$ pseudo-shatters a set of $n$ pairs of points $(\hatX_1,\tilX_1),(\hatX_2,\tilX_2),\cdots, (\hatX_n,\tilX_n)$ if there exists thresholds $t_1,t_2,\cdots, t_n \in \bbR$ such that for any $b_1,b_2,\cdots, b_n \in \{\pm 1\}$ there exists $K \in \calK$ with $\sgn((K(\hatX_i,\tilX_i))-t_i)=b_i$. The pseudo-dimension $d_{\calK}$ is the largest $n$ such that there exists a set of $n$ pairs of points that are pseudo-shattered by $\calK$.
	\end{definition}
	The pseudo-dimension of some class of kernel functions can be upper bounded (cf.~\citep{Nathan06}). For example, consider a family of Gaussian kernels:
	\begin{align}
	\calK_{\rmG}^l&:=\bigg\{K_A: (X_1,X_2)\mapsto e^{-(X_1-X_2)^T A (X_1-X_2)}:\nn\\
	& \qquad A \in \bbR^{l\times l}, A \succeq 0  \bigg\}.
	\end{align} 
	Then, it is known that \citep{Nathan06}:
	\begin{align}
	d_{\calK}(\calK_{\rmG}^l)\leq l(l+1)/2.
	\end{align}
	
	Now, recall the following result.
	\begin{lemma}\citep[Theorem 6]{YimingCOLT2009} \label{coyim} There exists a universal constant $C$ such that, for any $\bx=\{x_i: i \in [n]\}$, there holds:
		\begin{align}
		\calU_n(\calK)\leq C(1+\kappa)^2 d_{\calK}(\log (2e n^2)),
		\end{align}
		where
		\begin{align}
		\calU_n(\calK):=\frac{1}{n}\bbE_{\eps}\bigg[\sum_{i,j \in [n]: i<j} \eps_i \eps_j K(X_i,X_j)\bigg]
		\end{align} is defined as the empirical chaos complexity over $\calK$ (see \citep{YimingCOLT2009} for more details).
	\end{lemma}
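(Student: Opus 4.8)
The plan is to regard $\calU_n(\calK)$ as a homogeneous Rademacher chaos of order two, indexed by the kernel family $\calK$, and to bound its expected supremum by a chaining argument whose metric entropy is controlled through the pseudo-dimension $d_\calK$. (Throughout, $\calU_n(\calK)$ is read as the expected absolute supremum $\tfrac1n\bbE_\eps\sup_{K\in\calK}\big|\sum_{i<j}\eps_i\eps_j K(x_i,x_j)\big|$, as is standard for a complexity functional.) The first step is to record a uniform bound on the integrand: by the reproducing property and Cauchy--Schwarz, $|K(x_i,x_j)|\le\sqrt{K(x_i,x_i)}\sqrt{K(x_j,x_j)}\le\kappa^2\le(1+\kappa)^2$ for every $K\in\calK$ and all $i,j$, so each kernel, viewed as a function on the $\binom{n}{2}$ sample pairs $\{(x_i,x_j):i<j\}$, is bounded in magnitude by $R:=(1+\kappa)^2$.

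Second, I would translate the pseudo-dimension into covering and growth estimates on $\calK$ restricted to the sample pairs. By the Haussler--Pollard covering bound, a class of pseudo-dimension $d_\calK$ and range $R$ has empirical $L_2$ covering numbers $\log N(\epsilon,\calK,L_2)\le c\,d_\calK\log(c'R/\epsilon)$, uniformly over the empirical measure on the (at most $n^2$) pairs; equivalently, by a Sauer--Shelah argument for pseudo-dimension the number of distinct threshold sign-patterns realizable on $\binom n2< n^2$ pairs is at most $(2en^2)^{d_\calK}$. This is the origin of the factor $\log(2en^2)$.

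Third comes the chaos concentration. For a fixed $K$, writing $A_K=(K(x_i,x_j))_{i<j}$, the quadratic form $Z_K:=\sum_{i<j}\eps_i\eps_j K(x_i,x_j)$ is sub-exponential: after decoupling (de la Pe\~na--Gin\'e) one applies a Hanson--Wright-type bound, giving $\bbP(|Z_K|>u)\le 2\exp(-c\min(u^2/\|A_K\|_F^2,\,u/\|A_K\|_{\mathrm{op}}))$, with $\|A_K\|_F,\|A_K\|_{\mathrm{op}}=O(nR)$ from the uniform bound. Combining this tail with the entropy estimate of the second step---either a union bound over an $\epsilon$-net of effective cardinality $(2en^2)^{d_\calK}$, or the two-term Dudley integral for chaos processes---the expected supremum splits into a sub-Gaussian part $\|A_K\|_F\sqrt{d_\calK\log(2en^2)}$ and a sub-exponential part $\|A_K\|_{\mathrm{op}}\,d_\calK\log(2en^2)$. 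The latter dominates, so $\bbE\sup_K|Z_K|\lesssim nR\,d_\calK\log(2en^2)$; dividing by $n$ and absorbing constants into $C$ yields $\calU_n(\calK)\le C(1+\kappa)^2 d_\calK\log(2en^2)$.

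The main obstacle is the chaos chaining itself. Unlike the ordinary Rademacher complexity, where a single sub-Gaussian (Dudley, $\int\sqrt{\log N}$) integral suffices, an order-two chaos is only sub-exponential, so decoupling together with the two-regime Hanson--Wright/Talagrand bound is unavoidable, and one must verify that the linear-in-entropy (operator-norm) term dominates---this is precisely what produces $\log(2en^2)$ rather than its square root. A secondary technical point is checking that the metric in which the pseudo-dimension controls the covering numbers is compatible with the metric governing the chaos increments, and that the relevant diameters are uniformly $O((1+\kappa)^2)$; the remaining steps are bookkeeping with the covering constants.
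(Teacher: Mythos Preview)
The paper does not supply a proof of this lemma; it is quoted verbatim as a known result from the cited reference (Theorem~6 of Ying and Campbell, COLT 2009), so there is no in-paper argument to compare against.

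That said, your reconstruction is structurally correct and captures the essential mechanism. The three ingredients you isolate---(i) the uniform entry bound $|K(x_i,x_j)|\le\kappa^2\le(1+\kappa)^2$, (ii) pseudo-dimension controlling covering numbers on the $O(n^2)$ sample pairs via Haussler/Sauer--Shelah, and (iii) a sub-exponential tail/chaining bound for the order-two Rademacher chaos---are exactly what drives the result. Your emphasis that the chaos is only sub-exponential, so that the linear-in-entropy term $\|A_K\|_{\mathrm{op}}\cdot d_\calK\log(2en^2)$ dominates (producing $\log(2en^2)$ rather than its square root), is the crucial observation. The original Ying--Campbell argument phrases step~(iii) through the Arcones--Gin\'e entropy-integral machinery for $U$-processes rather than the decoupling\,+\,Hanson--Wright route you sketch, but the two are equivalent in effect and yield the same dependence on $d_\calK$ and $n$. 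One small point: your crude bounds $\|A_K\|_F,\|A_K\|_{\mathrm{op}}=O(nR)$ are adequate here because the final statement does not track constants, but a sharper treatment would exploit the positive-semidefiniteness of the full kernel matrix to control $\|A_K\|_{\mathrm{op}}$ more tightly.
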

	Then, the following can be proved.
	\begin{theorem} \label{thm2} Let $\alpha\in (0,1)$. Then, with probability at least $1-\alpha$, for any $f \in \calH_{\calK}$ and
		$m\geq 1$, 
		\begin{align}
		\calE_{\delta}(f)&\leq \frac{8B}{\delta}\bigg( \sqrt{\frac{C(1+\kappa)^2 d_{\calK}(\log (2e n^2))}{n}}   +\frac{ \kappa}{\sqrt{n}}\bigg)\nn\\
		&\qquad +  \bigg(\sqrt{\frac{1}{2}\ln \frac{2\pi^2}{3\alpha}}+\log \log 2 \delta^{-1}\bigg)\sqrt{\frac{\tau_{\min}}{n}}+B_n  \label{d},
		\end{align}  for any  $\delta \in (0,1]$, where $C$ is a constant defined in Lemma \ref{coyim}.
	\end{theorem}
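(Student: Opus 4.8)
The plan is to reuse the proof of Theorem~\ref{thm1} essentially verbatim up to the point where the empirical Rademacher complexity $R_n(\calH_{\calK})$ is bounded, and to replace only that single step: instead of invoking Lemma~\ref{asto1}, I would bound $R_n(\calH_{\calK})$ through the empirical chaos complexity $\calU_n(\calK)$ and then apply Lemma~\ref{coyim} to pass to the pseudo-dimension $d_{\calK}$. Concretely, exactly as in Theorem~\ref{thm1}, I first apply Proposition~\ref{akey} with $\tilf(x,y):=yf(x)\in\calF:=\pm\calH_{\calK}$ (so that $M=B\kappa$ by the Cauchy--Schwarz bound $\|f\|_{\infty}\le B\kappa$) and with $\varphi(x)=\min(1,(1-x)_+)\ge\bone_{(-\infty,0]}$. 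Taking confidence $1-\alpha/2$ in Proposition~\ref{akey} (which produces the $\frac{2\pi^2}{3\alpha}$ inside the logarithm), this yields
\begin{align}
R(f)\le \hatR_{\delta}(f)+\frac{8}{\delta}R_n(\calH_{\calK})+\bigg(\sqrt{\tfrac{1}{2}\ln\tfrac{2\pi^2}{3\alpha}}+\log\log 2\delta^{-1}\bigg)\sqrt{\frac{\tau_{\min}}{n}}+B_n. \nn
\end{align}

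The new ingredient is the reduction of $R_n(\calH_{\calK})$ to $\calU_n(\calK)$. For $f\in\calH_K$ with $\|\bw\|_K\le B$ I write $f(X_i)=\langle\bw,\bPhi_K(X_i)\rangle_K$ and apply Cauchy--Schwarz in $\calH_K$, giving $\big|\sum_i\eps_i f(X_i)\big|\le B\big\|\sum_i\eps_i\bPhi_K(X_i)\big\|_K$. Expanding the squared RKHS norm as $\big\|\sum_i\eps_i\bPhi_K(X_i)\big\|_K^2=\sum_i K(X_i,X_i)+2\sum_{i<j}\eps_i\eps_j K(X_i,X_j)$, bounding the diagonal by $\sum_i K(X_i,X_i)\le n\kappa^2$, taking the supremum over $K\in\calK$ followed by $\bbE_{\eps}$, and moving the supremum inside the square root via Jensen's inequality, I obtain
\begin{align}
R_n(\calH_{\calK})\le \frac{B}{\sqrt{n}}\sqrt{\kappa^2+2\calU_n(\calK)}\le \frac{B\kappa}{\sqrt{n}}+B\sqrt{\frac{2\,\calU_n(\calK)}{n}}, \nn
\end{align}
where the last step uses $\sqrt{a+b}\le\sqrt{a}+\sqrt{b}$ to split off the $\kappa/\sqrt{n}$ term.

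It then remains to substitute the pseudo-dimension bound of Lemma~\ref{coyim}, namely $\calU_n(\calK)\le C(1+\kappa)^2 d_{\calK}(\log(2en^2))$, absorbing the factor $2$ into the universal constant $C$, to reach $R_n(\calH_{\calK})\le B\big(\kappa/\sqrt{n}+\sqrt{C(1+\kappa)^2 d_{\calK}(\log(2en^2))/n}\big)$. Crucially, Lemma~\ref{coyim} holds for \emph{every} realization $\bx=\{x_i:i\in[n]\}$, so this bound is deterministic and, unlike the high-probability Lemma~\ref{asto1} used in Theorem~\ref{thm1}, consumes no confidence budget; plugging it into the displayed inequality above and simplifying gives \eqref{d} with probability at least $1-\alpha$.

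The main obstacle I anticipate is precisely this reduction from the Rademacher complexity to the chaos complexity: one must justify interchanging the supremum over $K$ with the expectation over $\eps$ and with the square root (the latter via Jensen), correctly isolate the diagonal contribution that yields the $\kappa/\sqrt{n}$ term, and carefully track the constant factors --- most delicately the factor $2$ multiplying $\calU_n(\calK)$, which I absorb into the universal constant $C$ of Lemma~\ref{coyim}. Once this bound on $R_n(\calH_{\calK})$ is in hand, the remaining steps are a direct transcription of the proof of Theorem~\ref{thm1}.
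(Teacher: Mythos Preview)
Your proposal is correct and follows essentially the same route as the paper's proof: both apply Cauchy--Schwarz in $\calH_K$ to pass to $\big(\sum_{i,j}\eps_i\eps_j K(X_i,X_j)\big)^{1/2}$, split off the diagonal $\tr(K)\le n\kappa^2$ from the off-diagonal chaos term, use Jensen's inequality for the concave square root, invoke Lemma~\ref{coyim} (which holds for every sample and hence requires no confidence budget), and then feed the resulting deterministic bound on $R_n(\calH_{\calK})$ into Proposition~\ref{akey}. Your bookkeeping of the factor~$2$ on the off-diagonal sum and your remark on the confidence split are in fact slightly more careful than the paper's write-up, but the argument is the same.
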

	The proof of Theorem \ref{thm2} is provided in Supplement Materials (cf. Section \ref{thm:thm2proof}).
	\begin{remark} For i.i.d. dataset, the following uniform convergence result for a class of real-valued functions $\calF$ hold \citep[Theorem 10.1]{Anthony1999b}:
		\begin{align}
		&\bbP\bigg[\calE_{\delta}(f)\geq \eps \enspace \mbox{for some} \enspace f \in \calF\bigg]\nn\\
		&\qquad \leq 2 \calN_{\infty}(\gamma/2,\calF,2n)\exp\bigg(-\frac{\eps^2n}{8}\bigg) \label{amote}
		\end{align} where $\calN_{\infty}(\gamma/2,\calF,2n)$ is the $L_{\infty}$-covering number of the predictor class $\calF$ by considering all possible inputs $\bx$ of size $m$. By using this fact, \citep{Srebro2006LearningBF} provided an margin bound on $\sup_{f \in \calF} \calE_{\delta}(f)$ by using the pseudo-dimension $d_{\calK}$. Later, \citep{Hussain2011} improved this bound for sparse MKL. Unfortunately, for the mixed dataset (or non-i.i.d. dataset in general), \eqref{amote} does not hold since this bound is derived based on some symmetrization properties (permutations) which only hold for i.i.d. datasets. See a detailed proof for \eqref{amote} in \citep{Anthony1999b} with permutation arguments.
	\end{remark}
	By using combinatorial analysis, \citep{Cortes2010} provided the following bounds for the Rademacher complexity $\calR(\calH_{\calK_m^q})$:
	\begin{lemma}\citep[Theorem 4 and Theorem 2]{Cortes2010} \label{corelem} Let $q,r \geq 1$ with $\frac{1}{q}+\frac{1}{r}=1$ and assume that $r$ is an integer. Let $m>1$ and assume that $K(x,x) \leq \kappa^2$ for all $x \in \calX$ and $K \in \calK_m^q$. Then, for any sample $\bz$ of size $n$, the Rademacher complexity of the hypothesis set $\calH_{\calK_m^q}$ can be bounded as follows:
		\begin{align}
		\calR(\calH_{\calK_m^q})\leq B\kappa\sqrt{\frac{\eta_0 r m^{1/r} }{n}}
		\end{align}
		where $\eta_0=23/22$. Especially, it holds that
		\begin{align}
		\calR(\calH_{\calK_m^1})\leq B \kappa\sqrt{\frac{\eta_0 e \lceil \log m \rceil }{n}}.
		\end{align}
	\end{lemma}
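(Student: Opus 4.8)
This lemma is the $L_q$ kernel-learning bound of Cortes, Mohri and Rostamizadeh, so it may simply be invoked; the plan below reconstructs a self-contained argument, all expectations being over the signs $\eps=(\eps_1,\dots,\eps_n)$ given the fixed sample $\bz$. First I would reduce the empirical Rademacher complexity to a quadratic form in the base-kernel Gram matrices. Since each $\calH_K=\{x\mapsto\langle\bw,\bPhi_K(x)\rangle_K:\|\bw\|_K\le B\}$ is symmetric, the absolute value may be dropped, and the inner supremum over $\bw$ is attained at the representer $\bw\propto\sum_i\eps_i\bPhi_K(x_i)$; using $\langle\bPhi_K(x_i),\bPhi_K(x_j)\rangle_K=K(x_i,x_j)$ this gives
\begin{align}
\calR(\calH_{\calK_m^q}) = \frac{B}{n}\, \bbE_{\eps}\Big[\sup_{K \in \calK_m^q} \sqrt{\sum_{i,j} \eps_i \eps_j K(x_i,x_j)}\,\Big].
\end{align}
Writing $K=\sum_{l=1}^m\eta_l K_l$ and setting $u_l:=\sum_{i,j}\eps_i\eps_j K_l(x_i,x_j)=\big\|\sum_i\eps_i\bPhi_{K_l}(x_i)\big\|_{K_l}^2\ge 0$, the objective under the square root is linear in $\eta$, so maximizing over the $L_q$ simplex is a dual-norm computation, $\sup_{\eta\ge 0,\,\|\eta\|_q=1}\langle\eta,u\rangle=\|u\|_r$ with $1/q+1/r=1$. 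Hence $\calR(\calH_{\calK_m^q})=\frac{B}{n}\bbE_{\eps}[\|u\|_r^{1/2}]$.

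Next I would push the expectation inside by two applications of Jensen's inequality (concavity of $t\mapsto t^{1/2}$ and of $t\mapsto t^{1/r}$ for $r\ge 1$):
\begin{align}
\bbE_{\eps}\big[\|u\|_r^{1/2}\big] \le \big(\bbE_{\eps}\|u\|_r\big)^{1/2} \le \Big(\sum_{l=1}^m \bbE_{\eps}\big[u_l^{\,r}\big]\Big)^{1/(2r)}.
\end{align}
The crux is then a moment bound on the Rademacher quadratic form $u_l=\eps^{\top}M_l\eps$, where $M_l$ is the positive semidefinite Gram matrix of $K_l$ with $\tr M_l=\sum_i K_l(x_i,x_i)\le n\kappa^2$. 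I would expand $(\eps^{\top}M_l\eps)^r$ into a sum over $2r$-tuples of indices, observe that $\bbE_{\eps}$ annihilates every monomial in which some index appears an odd number of times, and then count the surviving even patterns, bounding the corresponding products of entries of $M_l$ by powers of $\tr M_l$. This count is exactly where the numerical constant $\eta_0=23/22$ is forced, yielding $\bbE_{\eps}[u_l^{\,r}]\le(\eta_0 r\,\tr M_l)^r\le(\eta_0 r\, n\kappa^2)^r$. I expect this moment estimate to be the main obstacle: the combinatorics must be controlled tightly enough to produce the small constant rather than a crude one (a sanity check at $r=1$ gives $\bbE_{\eps}[u_l]=\tr M_l$, so the bound has the right leading behaviour). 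Substituting, $\sum_l\bbE_{\eps}[u_l^{\,r}]\le m(\eta_0 r n\kappa^2)^r$, whence $\calR(\calH_{\calK_m^q})\le\frac{B}{n}m^{1/(2r)}\sqrt{\eta_0 r n}\,\kappa=B\kappa\sqrt{\eta_0 r m^{1/r}/n}$, the first claim.

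Finally, for $q=1$ (conjugate $r=\infty$) I would note that for any nonnegative $u$ and any finite integer $r\ge 1$ one has $\|u\|_\infty\le\|u\|_r$, so the first display bounds the $L_1$ complexity for every such $r$, and the free parameter may be optimized. Since $r\,m^{1/r}=r\,e^{(\log m)/r}$ is minimized at $r=\log m$ with value $e\log m$, the integer choice $r=\lceil\log m\rceil$ yields $\calR(\calH_{\calK_m^1})\le B\kappa\sqrt{\eta_0 e\lceil\log m\rceil/n}$, the second claim.
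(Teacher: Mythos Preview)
The paper provides no proof of this lemma; it is quoted verbatim from Cortes, Mohri and Rostamizadeh (2010) and simply invoked. You correctly note this at the outset, so in the context of the present paper nothing further is required.

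Your optional reconstruction is faithful to the original Cortes--Mohri--Rostamizadeh argument: the reduction to $\sup_K\sqrt{\eps^\top K\eps}$, the dual-norm identification $\sup_{\eta\ge 0,\|\eta\|_q=1}\langle\eta,u\rangle=\|u\|_r$ (valid because $u\ge 0$ makes the nonnegativity constraint inactive), the two Jensen steps, and the optimization over integer $r$ for the $q=1$ case are all exactly their route. The one place you only sketch---the moment inequality $\bbE_\eps[(\eps^\top M\eps)^r]\le(\eta_0 r\,\tr M)^r$ with $\eta_0=23/22$---is precisely their Lemma~1, and your description of the mechanism (expand, kill odd-parity monomials, count even patterns and bound by $(\tr M)^r$) is the right picture; filling in the constant requires their partition-counting estimate, which is indeed the only nontrivial step. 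Your sanity check at $r=1$ and the calculus showing $r\,m^{1/r}$ is minimized near $r=\log m$ with value $e\log m$, hence $r=\lceil\log m\rceil$ gives $m^{1/r}\le e$, are both correct.
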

	Hence, the following theorem and corollary are direct applications of Lemma \ref{corelem} and Proposition \ref{akey}.
	\begin{theorem} \label{thm3}  Let $q,r \geq 1$ with $\frac{1}{q}+\frac{1}{r}=1$ and assume that $r$ is an integer. Then, for any $\alpha\in (0,1)$ and $m>1$, with probability at least $1-\alpha$, for any $f \in \calH_{\calK_m^q}$, 
		\begin{align}
		&\calE_{\delta}(f)\leq \frac{8B\kappa}{\delta}\sqrt{\frac{\eta_0 r m^{1/r}}{n}} \nn\\
		&\qquad	+  \bigg(\sqrt{\frac{1}{2}\ln \frac{2\pi^2}{3\alpha}}+\log \log 2 \delta^{-1}\bigg)\sqrt{\frac{\tau_{\min}}{n}}+B_n \label{fu}
		\end{align} for any $\delta \in (0,1]$. 
	\end{theorem}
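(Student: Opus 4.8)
The plan is to obtain \eqref{fu} as a direct specialization of Proposition \ref{akey}, exactly paralleling the proof of Theorem \ref{thm1}, but feeding in the deterministic combinatorial Rademacher bound of Lemma \ref{corelem} in place of the probabilistic estimate of Lemma \ref{asto1}. Concretely, I would instantiate Proposition \ref{akey} on the margin class $\calF:=\pm\calH_{\calK_m^q}$ with the clipped hinge surrogate $\varphi(x)=\min(1,(1-x)_+)$, which is non-increasing, $1$-Lipschitz (so $L(\varphi)=1$), and dominates the step loss, $\varphi(x)\geq \bone_{(-\infty,0]}(x)$; the remaining work is then to replace $R_n(\calH_{\calK_m^q})$ by the closed-form bound $B\kappa\sqrt{\eta_0 r m^{1/r}/n}$ and to account for the confidence levels.

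First I would establish the boundedness needed to fix $M$ in Proposition \ref{akey}. For every $f\in\calH_{\calK_m^q}$ there is a representing $K\in\calK_m^q$ and weight $\bw$ with $\|\bw\|_K\leq B$, so by Cauchy--Schwarz and the reproducing property $|f(x)|=|\langle \bw,\bPhi_K(x)\rangle_K|\leq \|\bw\|_K\sqrt{K(x,x)}\leq B\kappa$, using $K(x,x)\leq\kappa^2$ on $\calK_m^q$; hence $\|\tilf\|_\infty\leq B\kappa=:M$ for $\tilf(x,y)=yf(x)$. Next, exactly as in the proof of Theorem \ref{thm1}, the mixed-source structure gives $P\{\,Yf(X)\leq 0\,\}=\sum_{P\in\calP}\mu_P\,\bbE_{\pi_P}[\bone_{Yf(X)\leq 0}]=R(f)$, while the surrogate domination yields $\varphi(Y_if(X_i)/\delta)\leq \bone_{Y_if(X_i)<\delta}$, so $P_n\varphi(\tilf/\delta)\leq \hatR_\delta(f)$. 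Thus Proposition \ref{akey}, applied at failure level $\alpha/2$ (i.e. with $t=\sqrt{\tfrac12\ln\tfrac{2\pi^2}{3\alpha}}$), gives with probability at least $1-\alpha/2$, simultaneously over all $\delta\in(0,1]$,
\begin{align}
\calE_\delta(f)=R(f)-\hatR_\delta(f)\leq \frac{8}{\delta}R_n(\calH_{\calK_m^q})+\bigg(\sqrt{\tfrac12\ln\tfrac{2\pi^2}{3\alpha}}+\log\log 2\delta^{-1}\bigg)\sqrt{\frac{\tau_{\min}}{n}}+B_n.
\end{align}

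It remains only to bound the Rademacher term. Since the Rademacher variables are symmetric, $\eps_i Y_i$ is again Rademacher, so $R_n(\pm\calH_{\calK_m^q})=R_n(\calH_{\calK_m^q})$, and Lemma \ref{corelem} supplies the deterministic estimate $R_n(\calH_{\calK_m^q})\leq B\kappa\sqrt{\eta_0 r m^{1/r}/n}$. Substituting produces the leading term $\frac{8B\kappa}{\delta}\sqrt{\eta_0 r m^{1/r}/n}$ of \eqref{fu}. Because this Rademacher bound is \emph{deterministic}, no second event needs to be intersected: the stated confidence $1-\alpha$ holds (in fact with room to spare, $1-\alpha/2$), which is exactly why the $\tfrac{2\pi^2}{3\alpha}$ factor appears without a genuine union bound, in contrast to Theorem \ref{thm1}. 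I do not anticipate a substantive obstacle here, as all the analytic weight was carried by Proposition \ref{akey} and Lemma \ref{corelem}; the only points requiring care are the surrogate-to-indicator reduction $\varphi\geq\bone_{(-\infty,0]}$ (ensuring the empirical margin loss upper-bounds through $\hatR_\delta$) and the confidence bookkeeping that reconciles the deterministic Rademacher bound with the $\alpha/2$ split inherited from the presentation of Theorems \ref{thm1}--\ref{thm2}.
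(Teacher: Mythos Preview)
Your proposal is correct and matches the paper's approach exactly: the paper states that Theorem \ref{thm3} is a ``direct application of Lemma \ref{corelem} and Proposition \ref{akey}'', and you have filled in the routine details precisely as in the proof of Theorem \ref{thm1}, substituting the deterministic Rademacher bound from Lemma \ref{corelem} for the probabilistic one in Lemma \ref{asto1}. Your observation about the confidence bookkeeping is also apt: because Lemma \ref{corelem} holds surely, applying Proposition \ref{akey} at level $\alpha/2$ already yields probability at least $1-\alpha/2\geq 1-\alpha$, so the constant $\tfrac{2\pi^2}{3\alpha}$ in \eqref{fu} is slightly loose (indeed the ensuing Corollary uses $\tfrac{\pi^2}{3\alpha}$).
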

	\begin{corollary} Let $\alpha\in [0,1]$. Then, with probability at least $1-\alpha$, for any $f \in \calH_{\calK_m^1}$,
		\begin{align}
		&\calE_{\delta}(f) \leq \frac{8B\kappa}{\delta}\sqrt{\frac{\eta_0 e\lceil \log m \rceil }{n}} \nn\\
		&\enspace +  \bigg(\sqrt{\frac{1}{2}\ln \frac{\pi^2}{3\alpha}}+\log \log 2 \delta^{-1}\bigg)\sqrt{\frac{\tau_{\min}}{n}}+B_n \label{e}
		\end{align} for any $\delta \in (0,1]$.
	\end{corollary}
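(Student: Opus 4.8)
The plan is to obtain the Corollary as the $q=1$ specialization of the argument underlying Theorem \ref{thm3}: feed the \emph{deterministic} Rademacher bound for $\calH_{\calK_m^1}$ from the second display of Lemma \ref{corelem} into Proposition \ref{akey}. The skeleton is identical to the proof of Theorem \ref{thm1}, the only change being that the probabilistic kernel-complexity estimate of Lemma \ref{asto1} is replaced by the deterministic estimate $R_n(\calH_{\calK_m^1}) \leq B\kappa\sqrt{\eta_0 e\lceil\log m\rceil/n}$, which holds for every sample $\bz$.

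Concretely, I would set $\tilf(x,y) := yf(x)$ and work with the symmetric class $\calF := \pm\calH_{\calK_m^1}$, noting $R_n(\pm\calH_{\calK_m^1}) = R_n(\calH_{\calK_m^1})$ since negating $f$ leaves the absolute value in \eqref{defRM} unchanged. As in the proof of Theorem \ref{thm1}, Cauchy--Schwarz gives $|f(x)| = |\langle\bw,\bPhi(x)\rangle_K| \leq \|\bw\|_K\sqrt{K(x,x)} \leq B\kappa$, so $\|\tilf\|_\infty \leq B\kappa =: M$, while the mixture identity $P(Yf(X)\leq 0) = \sum_{P\in\calP}\mu_P\bbE_{\pi_P}[\bone_{Yf(X)\leq 0}] = R(f)$ recovers the generalization error. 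Applying Proposition \ref{akey} with $\varphi(x) = \min(1,(1-x)_+)$, which is non-increasing, dominates $\bone_{(-\infty,0]}$, and is $1$-Lipschitz so that $L(\varphi)=1$, yields with probability at least $1-\alpha$ and simultaneously for every $\delta\in(0,1]$,
\begin{align}
R(f) \leq \hatR_\delta(f) + \frac{8}{\delta}R_n(\calH_{\calK_m^1}) + \left(\sqrt{\tfrac{1}{2}\ln\tfrac{\pi^2}{3\alpha}} + \log\log 2\delta^{-1}\right)\sqrt{\frac{\tau_{\min}}{n}} + B_n,
\end{align}
where I have used $P_n\varphi(\tilf/\delta) = \frac{1}{n}\sum_i \min(1,[1-Y_if(X_i)/\delta]_+) \leq \frac{1}{n}\sum_i \bone_{Y_if(X_i)<\delta} = \hatR_\delta(f)$. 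Substituting the deterministic bound of Lemma \ref{corelem} and recalling $\calE_\delta(f) = R(f) - \hatR_\delta(f)$ then reproduces \eqref{e}.

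There is no genuine obstacle here; the only point demanding care is the probability bookkeeping. In Theorem \ref{thm1}, Lemma \ref{asto1} holds only with probability $1-\alpha/2$, forcing a union bound with Proposition \ref{akey} (invoked at confidence $1-\alpha/2$) and hence the factor $2$ inside the logarithm. By contrast, Lemma \ref{corelem} is deterministic, so no union bound is needed: Proposition \ref{akey} is invoked a single time at confidence $1-\alpha$, and the error term is $\sqrt{\tfrac{1}{2}\ln\tfrac{\pi^2}{3\alpha}}$ rather than $\sqrt{\tfrac{1}{2}\ln\tfrac{2\pi^2}{3\alpha}}$, matching \eqref{e} exactly. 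The remaining verifications, namely $L(\varphi)=1$ and the hinge-loss domination $P_n\varphi(\tilf/\delta)\leq\hatR_\delta(f)$, are routine, and the uniformity over $\delta\in(0,1]$ (with its $\log\log 2\delta^{-1}$ price) is already supplied by Proposition \ref{akey}.
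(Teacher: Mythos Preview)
Your proposal is correct and follows exactly the route the paper intends: the paper states that the Corollary is a ``direct application of Lemma \ref{corelem} and Proposition \ref{akey},'' and you have spelled this out precisely, including the correct observation that no union bound is needed (the Rademacher estimate of Lemma \ref{corelem} is deterministic), which accounts for the constant $\pi^2/(3\alpha)$ in place of the $2\pi^2/(3\alpha)$ appearing in Theorem \ref{thm1}.
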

	\appendix
	\section{Proof of Lemma \ref{lem:keymodx}}\label{lem:keymodxproof}
	First, recall the following result which was developed base on the spectral method \citep{Pascal2001}:
	\begin{lemma} \citep[Theorems 3.41]{Rudolf2011}  \label{lem:berryessen} Let $X_1,X_2,\cdots,X_n$ be a stationary Markov chain on some Polish space with $L_2$ spectral gap $\lambda$ defined in Section \ref{sec:background} and the initial distribution $\nu \in \calM_2$. Let $f \in \calF$ and define
		\begin{align}
		S_{n,n_0}(f)=\frac{1}{n}\sum_{j=1}^n f(X_{j+n_0})
		\end{align} for all $n_0\geq 0$. Then, it holds that
		\begin{align}
		&\bbE\bigg[\bigg|S_{n,n_0}(f)-\bbE_{\pi}[f(\bX)]\bigg|^2\bigg]\nn\\
		&\qquad \leq \frac{2M}{n(1-\lambda)}+ \frac{64 M^2}{n^2(1-\lambda)^2}\lambda^{n_0}\bigg\|\frac{dv}{d\pi}-1\bigg\|_2.
		\end{align}
	\end{lemma}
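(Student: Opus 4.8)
The plan is to move everything onto the transition operator $\bQ$ acting on $L_2(\pi)$ and to exploit that the $L_2$ spectral gap governs the contraction of centered functions. First I would center: set $g:=f-\bbE_\pi[f]$, so that $\langle g,\bone\rangle_\pi=0$ and $S_{n,n_0}(f)-\bbE_\pi[f]=\tfrac1n\sum_{j=1}^n g(X_{j+n_0})$. The defining identity $\lambda=\|\bQ-\bE_\pi\|_{L_2(\pi)\to L_2(\pi)}$ together with $\bE_\pi g=0$ (so that $\bQ g=(\bQ-\bE_\pi)g$) yields the contraction $\|\bQ^t g\|_{2,\pi}\le\lambda^t\|g\|_{2,\pi}$ for all $t\ge0$; since $\bE_\pi$ is self-adjoint, the same bound $\|(\bQ^*)^t h\|_{2,\pi}\le\lambda^t\|h\|_{2,\pi}$ holds for any $h$ with $\langle h,\bone\rangle_\pi=0$. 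These two contractions are the only spectral facts the proof needs, and they hold without assuming reversibility.

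Next I would expand the mean square error into a double sum and split each summand into a stationary part and a correction caused by the non-stationary start. Writing $\rho_0:=dv/d\pi$ and using that $X_t$ has $\pi$-density $(\bQ^*)^{t-1}\rho_0$, the Markov property (conditioning on $X_{j+n_0}$) gives, for $j\le k$,
\begin{align}
\bbE\big[g(X_{j+n_0})g(X_{k+n_0})\big]=\big\langle g\cdot(\bQ^{k-j}g),\,(\bQ^*)^{j+n_0-1}\rho_0\big\rangle_\pi .
\end{align}
I would then decompose $(\bQ^*)^{j+n_0-1}\rho_0=\bone+(\bQ^*)^{j+n_0-1}(\rho_0-\bone)$, observing that $\langle\rho_0-\bone,\bone\rangle_\pi=v(\calS)-1=0$, so that $\rho_0-\bone$ is centered and hence contracted by $(\bQ^*)^{j+n_0-1}$. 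The first piece produces the stationary autocovariance $\langle g,\bQ^{k-j}g\rangle_\pi$, while the second piece is the start-up correction.

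For the stationary part I would bound $|\langle g,\bQ^{k-j}g\rangle_\pi|\le\lambda^{|k-j|}\|g\|_{2,\pi}^2$ by Cauchy--Schwarz and the contraction, sum the geometric series $\sum_{j,k=1}^n\lambda^{|j-k|}\le n\cdot\frac{1+\lambda}{1-\lambda}\le\frac{2n}{1-\lambda}$, and use $\|g\|_{2,\pi}^2=\var_\pi(f)\le\bbE_\pi[f^2]\le M$ to obtain the leading term $\frac{2M}{n(1-\lambda)}$. For the correction I would apply Cauchy--Schwarz as $|\langle g\cdot\bQ^{k-j}g,(\bQ^*)^{j+n_0-1}(\rho_0-\bone)\rangle_\pi|\le\|g\cdot\bQ^{k-j}g\|_{2,\pi}\,\lambda^{j+n_0-1}\|\rho_0-\bone\|_{2,\pi}$, and bound the product pointwise by $\|g\|_\infty\,|\bQ^{k-j}g|$, so that $\|g\cdot\bQ^{k-j}g\|_{2,\pi}\le\|g\|_\infty\lambda^{k-j}\|g\|_{2,\pi}$. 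Summing $\sum_{j<k}\lambda^{k-j}\lambda^{j+n_0-1}\le\lambda^{n_0-1}\big(\sum_{d\ge1}\lambda^d\big)^2=\frac{\lambda^{n_0+1}}{(1-\lambda)^2}$ decouples into two geometric sums, which produce the $(1-\lambda)^{-2}$ while the burn-in supplies the discount $\lambda^{n_0}$; bounding the diagonal correction the same way and using $\|g\|_\infty\le 2M$, $\|g\|_{2,\pi}\le M$ collects everything into $\frac{64M^2}{n^2(1-\lambda)^2}\lambda^{n_0}\|\rho_0-\bone\|_{2,\pi}$.

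The main obstacle is the bookkeeping of the non-stationary correction rather than the stationary variance bound, which is classical. One must keep the initial-density deviation $\rho_0-\bone$ in the correct slot, verify it is orthogonal to constants so that $(\bQ^*)^{j+n_0-1}$ contracts it by exactly $\lambda^{j+n_0-1}$, and choose the right mixture of $L_2(\pi)$ and $L_\infty$ norms on the product $g\cdot\bQ^{k-j}g$ so that the sums over $k-j$ and over $j$ both converge and the burn-in factor $\lambda^{n_0}$ is extracted uniformly in the summation indices. Once this is arranged, collecting the (harmless) constants into $64$ completes the estimate.
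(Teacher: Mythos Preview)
The paper does not prove this lemma at all: it is quoted verbatim from Rudolf (2011, Theorem 3.41) and used as a black box in the proof of Lemma~\ref{lem:keymodx}. So there is no in-paper argument to compare against; what one can compare to is Rudolf's original proof, and your outline is essentially that proof. The standard route is exactly the one you describe: center $f$, expand the square into a double sum of autocovariances, write each autocovariance through the operator $\bQ$, split the law of $X_{j+n_0}$ into $\pi$ plus a deviation $(\bQ^*)^{j+n_0-1}(\rho_0-\bone)$ that is orthogonal to constants, and control the two pieces by the geometric contraction $\|\bQ^t g\|_{2,\pi}\le\lambda^t\|g\|_{2,\pi}$ together with Cauchy--Schwarz. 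Your identification of the main difficulty (keeping $\rho_0-\bone$ in the centered subspace so that $(\bQ^*)^{j+n_0-1}$ contracts it, and pairing an $L_\infty$ bound on one factor of $g$ with an $L_2(\pi)$ bound on $\bQ^{k-j}g$) is exactly the point of Rudolf's argument.

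One slip worth flagging: you write $\|g\|_{2,\pi}^2=\var_\pi(f)\le\bbE_\pi[f^2]\le M$, but with $\|f\|_\infty\le M$ one only gets $\bbE_\pi[f^2]\le M^2$. Your argument then actually yields $\tfrac{2M^2}{n(1-\lambda)}$ for the stationary term, not $\tfrac{2M}{n(1-\lambda)}$. This discrepancy is not a flaw in your method; in Rudolf's statement the first term carries $\|f\|_{2,\pi}^2$ (or the asymptotic variance), and the $2M$ in the version quoted here appears to be a transcription artifact rather than something your proof should reproduce. Apart from this bookkeeping point, the proposal is sound.
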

	We also recall the following important lemma. 
	\begin{lemma}\cite[Lemma 19]{Truong2022GEB}  \label{truonglem} Let $\{X_n\}_{n=1}^{\infty}$ be  an arbitrary process on a Polish space $\calS$, and let $\{Y_n\}_{n=1}^{\infty}$ be a independent copy (replica) of $\{X_n\}_{n=1}^{\infty}$. Denote by $\bX=(X_1,X_2,\cdots,X_n), \bY=(Y_1,Y_2,\cdots,Y_n)$, and $\calF$ a class of uniformly bounded functions from  $\calS \to \bbR$. Let $\bepsilon:=(\eps_1,\eps_2,\cdots,\eps_n)$ be a vector of i.i.d. Rademacher's random variables. Then, the following holds:
		\begin{align}
		&\bbE_{\bepsilon }\bigg[\bbE_{\bX,\bY} \bigg[\sup_{f \in \calF} \bigg|\sum_{i=1}^n\eps_i (f(X_i)-f(Y_i))\bigg|\bigg]\bigg]\nn\\
		&\qquad =\bbE_{\bX,\bY}\bigg[\sup_{f \in \calF} \bigg|\sum_{i=1}^n (f(X_i)-f(Y_i))\bigg|\bigg]\label{facs}.
		\end{align} 
		In addition, for any $f \in \calF$,
		\begin{align}
		&\bbE_{\bepsilon }\bigg[\bbE_{\bX} \bigg[\bigg|\bbE_{\bY} \bigg[\sum_{i=1}^n\eps_i (f(X_i)-f(Y_i))\bigg]\bigg|\bigg]\bigg]\nn\\
		&\qquad =\bbE_{\bX}\bigg[ \bigg|\bbE_{\bY}\bigg[\sum_{i=1}^n (f(X_i)-f(Y_i))\bigg]\bigg|\bigg]\label{facsto}.
		\end{align} 
	\end{lemma}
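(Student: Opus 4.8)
The plan is to prove both identities by a Rademacher sign-flipping (symmetrization) argument that exploits the assumption that $\bY$ is an independent replica of $\bX$. Throughout, I would write $D_i^f := f(X_i) - f(Y_i)$, so that the left-hand sides involve $\sum_{i=1}^n \eps_i D_i^f$ and the right-hand sides involve $\sum_{i=1}^n D_i^f$. The guiding observation is that, because $(\bX,\bY)$ and $(\bY,\bX)$ have the same law, the difference sequence $(D_1^f,\ldots,D_n^f)$ should be invariant in distribution under flipping the signs dictated by $\bepsilon$, and this invariance is exactly what converts the randomized sum into the deterministic one once expectations are taken.

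First I would prove \eqref{facs}. Fix a sign vector $\bepsilon \in \{\pm 1\}^n$ and let $S := \{i : \eps_i = -1\}$. Introduce the interchange map $\Phi_S$ that swaps $X_i \leftrightarrow Y_i$ for every $i \in S$ and leaves the remaining coordinates untouched. A direct check shows that $\Phi_S$ carries $\sum_i \eps_i D_i^f$ to $\sum_i D_i^f$ simultaneously for every $f \in \calF$: on indices $i \in S$ the swap replaces $f(X_i)-f(Y_i)$ by $f(Y_i)-f(X_i) = -D_i^f$, which the factor $\eps_i = -1$ restores to $+D_i^f$, while indices $i \notin S$ are unchanged; hence $\Phi_S$ preserves both the absolute value and the supremum over $f$. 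If one can establish the distributional identity $(\bX,\bY) \dequal \Phi_S(\bX,\bY)$, then the inner expectation $\bbE_{\bX,\bY}\big[\sup_{f \in \calF}\big|\sum_i \eps_i D_i^f\big|\big]$ does not depend on $\bepsilon$ and already equals the right-hand side of \eqref{facs}; averaging this constant over $\bepsilon$ finishes the argument.

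For \eqref{facsto} the reasoning is the same once the (linear) inner expectation over $\bY$ is carried through. Since $\bepsilon$ is independent of $\bY$, writing $m_i := \bbE[f(Y_i)]$ gives $\bbE_{\bY}\big[\sum_i \eps_i D_i^f\big] = \sum_i \eps_i(f(X_i)-m_i)$ and likewise $\bbE_{\bY}\big[\sum_i D_i^f\big] = \sum_i (f(X_i)-m_i)$. Thus \eqref{facsto} reduces to the sign-flip statement $\bbE_{\bepsilon}\bbE_{\bX}\big[\big|\sum_i \eps_i (f(X_i)-m_i)\big|\big] = \bbE_{\bX}\big[\big|\sum_i (f(X_i)-m_i)\big|\big]$, which I would again obtain from the distributional invariance of the centered sequence under the $\bepsilon$-dictated flips.

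The hard part will be justifying the distributional identity $(\bX,\bY) \dequal \Phi_S(\bX,\bY)$ on which both steps rest. Coordinatewise it is immediate that $(X_i,Y_i) \dequal (Y_i,X_i)$, since $X_i$ and $Y_i$ have the same marginal and are independent; the subtlety is that \eqref{facs} and \eqref{facsto} require the swaps performed \emph{simultaneously} across all $i \in S$ while preserving the full joint law of the two trajectories. I would attempt this by disintegrating the product law $P \otimes P$ over the swapped coordinates and checking that the push-forward under $\Phi_S$ returns $P \otimes P$, which on a general Polish state space calls for regular conditional distributions and a careful measurability argument. This simultaneous-invariance step is the crux: it is where the independent-replica assumption is genuinely used, and verifying it — as opposed to the easy per-coordinate statement and the algebraic sign-cancellation above — is the main obstacle of the whole proof.
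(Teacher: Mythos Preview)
The paper does not prove this lemma itself; it merely quotes it from the cited reference, so there is no in-paper argument to compare your attempt against. Nevertheless, your proposal has a genuine gap at exactly the point you flag as ``the main obstacle,'' and in fact the identity you plan to establish is false in the stated generality.

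The invariance $(\bX,\bY)\dequal\Phi_S(\bX,\bY)$ fails for a general dependent process, and with it both \eqref{facs} and \eqref{facsto} as written. Take $n=2$, $\calS=\{0,1\}$, $X_2=X_1$ almost surely with $X_1$ uniform on $\{0,1\}$, let $\bY$ be an independent replica, and set $\calF=\{f\}$ with $f(x)=x$. The right-hand side of \eqref{facs} equals $\bbE\,|2(X_1-Y_1)|=1$, whereas whenever $\eps_1=-\eps_2$ the sum $\eps_1(X_1-Y_1)+\eps_2(X_2-Y_2)$ vanishes identically, so the left-hand side is $\tfrac12\cdot 1+\tfrac12\cdot 0=\tfrac12$. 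The same data give $1$ versus $\tfrac12$ in \eqref{facsto}. Concretely, $\Phi_{\{1\}}$ maps $(X_1,X_2,Y_1,Y_2)$ to $(Y_1,X_2,X_1,Y_2)$; under $P\otimes P$ the first two coordinates coincide a.s., but after the swap they are independent, so the push-forward is not $P\otimes P$. Your planned disintegration therefore cannot go through: the coordinatewise swap trick requires the pairs $(X_i,Y_i)$ to be independent across $i$, which the ``arbitrary process'' hypothesis does not provide. The same objection applies to your reduction of \eqref{facsto}: the centered variables $f(X_i)-m_i$ have mean zero but are not symmetric, so sign-flipping them does not preserve the law of $\bX$. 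In short, the obstacle you identified is not a technicality to be handled by careful measure theory; it is an actual obstruction.
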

	
	Now, we return to the proof of Lemma \ref{lem:keymodx}. For each $f\in \calF$, observe that
	\begin{align}
	&\frac{1}{n}\sum_{i=1}^n f(X_i)-\int_{\calS}  \pi(x) f(x)dx\nn\\
	&= \frac{1}{n}\sum_{i=1}^n f(X_i)- \bbE[f(X_i)]\nn\\
	&\qquad  + \frac{1}{n}\sum_{i=1}^n \bbE[f(X_i)] -\int_{\calS} \pi(x) f(x) dx \label{C1}.
	\end{align}	
	On the other hand, we have
	\begin{align}
	&\bigg|\frac{1}{n}\sum_{i=1}^n \bbE[f(X_i)]-\int_{\calS}  \pi(x) f(x)dx\bigg|\nn\\
	%&\quad = \bigg|\frac{1}{n}\sum_{P\in \calP} \sum_{i\in \calT_P} \bbE[f(X_i)]-\int_{\calS}  \pi(x) f(x)dx\bigg|\\
	&\quad  =\bigg|\frac{1}{n}\sum_{P\in \calP} \sum_{i\in \calT_P} \bbE[f(X_i)]-\sum_{P \in\ \calP} \mu_P \bbE_{\pi_P}[f(X)]\bigg|\\
	&\quad  \leq \sum_{P\in \calP} \bigg|\frac{1}{n} \sum_{i\in \calT_P} \bbE[f(X_i)]- \mu_P \bbE_{\pi_P}[f(X)]\bigg|\\
	&\quad \leq \sum_{P\in \calP} \bigg|\frac{\mu_P }{T_P} \sum_{i\in \calT_P} \bbE[f(X_i)]- \mu_P \bbE_{\pi_P}[f(X)]\bigg|\\
	&\quad =\sum_{P\in \calP}\mu_P \bigg|\frac{1}{T_P} \sum_{i\in \calT_P} \bbE[f(X_i)]-  \bbE_{\pi_P}[f(X)]\bigg|\\
	&\quad \leq \sum_{P\in \calP}\mu_P \sqrt{\frac{2M}{n(1-\lambda_P)}+ \frac{64 M^2}{n^2(1-\lambda_P)^2}\bigg\|\frac{dv}{d\pi_P}-1\bigg\|_2} \label{keyg0} \\
	&\quad \leq A_n \label{keyg},
	\end{align} where \eqref{keyg0} follows from Lemma \ref{lem:berryessen} with $n_0=0$.
	
	By using $|a+b|\leq |a|+|b$, from \eqref{C1} and \eqref{keyg}, we obtain
	\begin{align}
	&\bbE\big[\big\|P_n-P\big\|_{\calF}\big]\nn\\
	&\qquad \leq \bbE\bigg[\sup_{f\in \calF}\bigg|\frac{1}{n}\sum_{i=1}^n f(X_i)- \bbE[f(X_i)]\bigg|\bigg] + A_n\label{eq15}.
	\end{align} 
	On the other hand, let $Y_1,Y_2,\cdots, Y_n$ is a replica of $X_1,X_2,\cdots, X_n$. It holds that
	\begin{align}
	&\bbE\bigg[\sup_{f\in \calF}\bigg|\frac{1}{n}\sum_{i=1}^n f(X_i)- \bbE[f(X_i)]\bigg|\bigg]\nn\\
	&\qquad =\bbE_{\bX}\bigg[\sup_{f\in \calF}\bigg|\bbE_{\bY}\bigg[\frac{1}{n}\sum_{i=1}^n f(X_i)- f(Y_i)\bigg]\bigg|\bigg] \label{a0} \\
	&\qquad \leq \bbE_{\bX} \bigg[\bbE_{\bY}\bigg[\sup_{f\in \calF}\bigg|\frac{1}{n}\sum_{i=1}^n f(X_i)- f(Y_i)\bigg|\bigg]\bigg] \label{a1}.
	\end{align}
	
	Now, by Lemma \ref{truonglem}, we have
	\begin{align}
	&\bbE\bigg[\sup_{f\in \calF}\bigg|\frac{1}{n}\sum_{i=1}^n f(X_i)-f(Y_i)\bigg|\bigg]\nn\\
	&\qquad =\bbE_{\eps} \bbE_{\bX,\bY}\bigg[\bigg\|\frac{1}{n}\sum_{i=1}^n \eps_i\big(f(X_i)- f(Y_i)\big)\bigg\|_{\calF}\bigg]\\
	%&\qquad \leq \bbE_{\eps} \bbE_{\bX}\bigg[\bigg\|\frac{1}{n}\sum_{i=1}^n \eps_i\big(f(X_i)\big)\bigg\|_{\calF}\bigg]
	%+ \bbE_{\eps} \bbE_{\bY}\bigg[\bigg\|\frac{1}{n}\sum_{i=1}^n \eps_i\big(f(Y_i)\big)\bigg\|_{\calF}\bigg]\\
	&\qquad \leq 2\bbE\big[\|P_n^0\|_{\calF}\big]
	\label{a4},
	\end{align} where \eqref{a4} follows from the fact that $\bY$ is a replica of $\bX$ and the triangle inequality for infinity norm.
	
	From \eqref{eq15} and \eqref{a4}, we finally obtain
	\begin{align}
	\bbE\big[\big\|P_n-P\big\|_{\calF}\big]  \leq 2\bbE\big[\|P_n^0\|_{\calF}\big] +A_n\label{xpess}.
	\end{align} 
	\bibliographystyle{plainnat}
	\bibliography{isitbib}
	\clearpage
	%\appendix
	%\thispagestyle{empty}
	\newpage
	\onecolumn
%	\aistatstitle{Suplementary Material}
	\section{Proof of Theorem \ref{thm2}} \label{thm:thm2proof}
	For each fixed $X^n \in \calX^n$ and $f \in \calH_K$ for some $K \in \calK$, observe that
	\begin{align}
	\bigg|\sum_{i=1}^n \eps_i f(X_i)\bigg|&=\bigg|\bigg\langle \bw, \sum_{i=1}^n \eps_i \bPhi_K(X_i)\bigg\rangle_K\bigg|\\
	&\leq \|\bw\|_K \bigg\|\sum_{i=1}^n \eps_i \bPhi_K(X_i)\bigg\|_K\\
	&\leq B \bigg\|\sum_{i=1}^n \eps_i \bPhi_K(X_i)\bigg\|_K\\
	&\leq B \sqrt{\sum_{i,j} \eps_i \eps_j K(X_i,X_j)} \label{mod1}.
	\end{align}
	It follows from \eqref{mod1} that
	\begin{align}
	\bbE_{\eps}\bigg[\sup_{f \in \calH_{\calK}}\bigg|\sum_{i=1}^n \eps_i f(X_i)\bigg|  \bigg]&=\bbE_{\eps}\bigg[\sup_{K\in \calK} \sup_{f \in \calH_K}\bigg|\sum_{i=1}^n\eps_i f(X_i)\bigg|\bigg]\\
	&\qquad \leq B \bbE_{\eps}\bigg[\sup_{K \in \calK}\sqrt{\sum_{i,j} \eps_i \eps_j K(X_i,X_j)}\bigg] \\
	&\qquad \leq B \sqrt{\bbE_{\eps}\bigg[\sup_{K \in \calK}\sum_{i,j} \eps_i \eps_j K(X_i,X_j)\bigg]}\\
	&\qquad =B \sqrt{n\calU_n(\calK)+ \tr(K)} \\
	&\qquad \leq B \sqrt{n\calU_n(\calK)} + B \sqrt{\tr(K)}\\
	&\qquad \leq B \sqrt{n\calU_n(\calK)} + B \kappa \sqrt{n}\\
	&\qquad \leq B \sqrt{Cn(1+\kappa)^2 d_{\calK}(\log (2e n^2))} +B \kappa \sqrt{n}
	\label{mod2},
	\end{align} where \eqref{mod2} follows from Lemma \ref{coyim}.
	
	From \eqref{mod1} and \eqref{mod2}, we obtain
	\begin{align}
	\calR_n(\calH_{\calK})\leq B \sqrt{\frac{C(1+\kappa)^2 d_{\calK}(\log (2e n^2))}{n}}  +\frac{ B \kappa}{\sqrt{n}} \label{metaga}.
	\end{align}
	By using Proposition \ref{akey} with $\calF=\calH_{\calK}$ and \eqref{metaga}, we obtain \eqref{d}. This concludes our proof of Theorem \ref{thm2}.
	\section{Proof of Proposition \ref{akey}}\label{akeyproof}	
	\begin{proof}[Proof of Proposition \ref{akey}]
		Without loss of generality, we can assume that each $\varphi \in \Phi$ takes its values in $[0,1]$ (otherwise, it can be redefined as $\varphi \wedge 1$). Then, it is clear that $\varphi(x)=1$ for $x\leq 0$. Hence, for each fixed $\varphi \in \Phi$ and $f \in \calF$, we obtain
		\begin{align}
		P\{f\leq 0\}&\leq P\varphi(f)\\
		&\leq P_n\varphi(f)+ \|P_n-P\|_{\calG_{\varphi}} \label{xaku0}, 
		\end{align} where
		\begin{align}
		\calG_{\varphi}:=\big\{\varphi\circ f: f \in \calF\big\}.
		\end{align}
		Now, let 
		\begin{align}
		g(\bx):=\sup_{f\in  \calG_{\varphi}}\bigg|\frac{1}{n}\sum_{i=1}^n f(x_i)-Pf\bigg|
		\end{align} for all $\bx=(x_1,x_2,\cdots,x_n) \in \Lambda$.
		
		Then, for all $\bx,\by \in \Lambda \times \Lambda$, we have
		\begin{align}
		\big|g(\bx)-g(\by)\big|	&=\bigg|\sup_{f\in  \calG_{\varphi}}\bigg|\frac{1}{n}\sum_{i=1}^n f(x_i)-Pf\bigg|-\sup_{f\in  \calG_{\varphi}}\bigg|\frac{1}{n}\sum_{i=1}^n f(y_i)-Pf\bigg|\bigg|\\
		&\leq \sup_{f\in  \calG_{\varphi}}\bigg|\bigg|\frac{1}{n}\sum_{i=1}^n f(x_i)-Pf\bigg|-\bigg|\frac{1}{n}\sum_{i=1}^n f(y_i)-Pf\bigg|\bigg|\\
		&\leq  \sup_{f\in  \calG_{\varphi}}\bigg|\frac{1}{n}\sum_{i=1}^n \big(f(x_i)-f(y_i)\big)\bigg|\\
		&\leq  \sup_{f\in  \calG_{\varphi}}\frac{1}{n}\sum_{i=1}^n \big|f(x_i)-f(y_i)\big|\\
		&\leq \frac{1}{n}\sum_{i=1}^n \bone\{x_i \neq y_i\} \label{su1},
		\end{align} where \eqref{su1} follows from $0\leq f(x_i)\leq 1$ for all $x_i \in \Lambda_i$ and $f \in \calG_{\varphi}$. Hence, for any $t>0$, by Lemma \ref{thm6key}, we have
		\begin{align}
		&\bbP\bigg(\|P_n-P\|_{\calG_{\varphi}} \geq \bbE\big[\|P_n-P\|_{\calG_{\varphi}}\big]+t\sqrt{\frac{\tau_{\min}}{n}}\bigg)\\
		&\qquad =\bbP\bigg(g(\bX) \geq \bbE\big[g(\bX)\big]+t\sqrt{\frac{\tau_{\min}}{n}}\bigg)\\ 
		&\qquad \leq 2\exp(-2t^2). 
		\end{align} 
		Hence, with probability at least $1-2\exp(-2t^2)$ for all $f \in \calF$,
		\begin{align}
		\|P_n-P\|_{\calG_{\varphi}}\leq \bbE\big[\|P_n-P\|_{\calG_{\varphi}}\big]+t\sqrt{\frac{\tau_{\min}}{n}} \label{xaku1}.
		\end{align}
		By combining \eqref{xaku0} and \eqref{xaku1}, we have
		\begin{align}
		P\{f\leq 0\}\leq P_n \varphi(f)+ \bbE[\|P_n-P\|_{\calG_{\varphi}}]+ t\sqrt{\frac{\tau_{\min}}{n}} \label{A2}.
		\end{align}
		Now, by Lemma \ref{lem:keymodx} with $\|f\|_{\infty}\leq 1$ for all $f \in \calG_{\varphi}$, it holds that
		\begin{align}
		\bbE\big[\big\|P_n-P\big\|_{\calG_{\varphi}}\big]&\leq  2\bbE\big[\|P_n^0\big\|_{\calG_{\varphi}} \big] + B_n\\
		&=2\bbE\bigg[\bigg\|n^{-1}\sum_{i=1}^n\eps_i \delta_{X_i}\bigg\|_{\calG_{\varphi}}\bigg]+ B_n
		\label{A3},
		\end{align} where $B_n=A_n|_{M=1}$.
		
		Since $(\varphi-1)/L(\varphi)$ is contractive and $\varphi(0)-1=0$, by using the Talagrand's contraction lemma \citep{LedouxT1991book, Truong2022OnRC}, we obtain 
		\begin{align}
		\bbE_{\eps}\bigg\|n^{-1}\sum_{i=1}^n \eps_i \delta_{X_i}\bigg\|_{\calG_{\varphi}} &\leq 2L(\varphi)\bbE_{\eps}\bigg\|n^{-1}\sum_{i=1}^n \eps_i \delta_{X_i}\bigg\|_{\calF} \\
		&=2L(\varphi) R_n(\calF) \label{A4}.
		\end{align}
		From \eqref{A2}, \eqref{A3}, and \eqref{A4}, with probability $1-2\exp(-2t^2)$,  we have for all $f \in \calF$, we have
		\begin{align}
		P\{f\leq 0\}\leq P_n\varphi(f)+ 4 L(\varphi) R_n(\calF)+ t\sqrt{\frac{\tau_{\min}}{n}}+B_n \label{A5}.
		\end{align}
		Let $\delta_k=2^{-k}$ for all $k\geq 0$.  In addition, set $\Phi=\{\varphi_k:k\geq 1\}$, where
		\begin{align}
		\varphi_k(x):=\begin{cases} \varphi(x/\delta_k), & x\geq 0,\\ \varphi(x/\delta_{k-1}),& x<0\end{cases}.
		\end{align}
		Now, for any $\delta \in (0,1]$, there exists $k$ such that $\delta \in (\delta_k,\delta_{k-1}]$. Hence, if $f(X_i)\geq 0$, it holds that $f(X_i)/\delta_k \geq f(X_i)/\delta$, so we have
		\begin{align}
		\varphi_k(f(X_i))&=\varphi\bigg(\frac{f(X_i)}{\delta_k}\bigg)\\
		&\leq \varphi\bigg(\frac{f(X_i)}{\delta}\bigg) \label{b1},
		\end{align} where \eqref{b1} follows from the fact that $\varphi(\cdot)$ is non-increasing.
		
		On the other hand, if $f(X_i)<0$, then $f(X_i)/\delta_{k-1} \geq f(X_i)/\delta$. Hence, we have
		\begin{align}
		\varphi_k(f(X_i))&=\varphi\bigg(\frac{f(X_i)}{\delta_{k-1}}\bigg)\\
		&\leq \varphi\bigg(\frac{f(X_i)}{\delta}\bigg) \label{b2},
		\end{align} where \eqref{b1} follows from the fact that $\varphi(\cdot)$ is non-increasing. 
		
		From \eqref{b1} and \eqref{b2}, we have
		\begin{align}
		P_n\varphi_k\big(f\big)&=\frac{1}{n}\sum_{i=1}^n \varphi_k(f(X_i))\\
		&\leq \frac{1}{n}\sum_{i=1}^n \varphi\bigg(\frac{f(X_i)}{\delta}\bigg)\\
		&= P_n\varphi\bigg(\frac{f}{\delta}\bigg) \label{b3}. 
		\end{align}
		Moreover, we also have
		\begin{align}
		\frac{1}{\delta_k}&\leq \frac{2}{\delta},
		\end{align}
		and
		\begin{align}
		\log k=\log \log_2 \frac{1}{\delta_k}\leq \log \log_2 2 \delta^{-1} \label{atm}.
		\end{align}
		Furthermore, observe that
		\begin{align}
		L(\varphi_k)&=\sup_{x\in \bbR}\bigg|\frac{d \varphi_k(x)}{dx}\bigg|\\
		&=\sup_{x\in \bbR}\bigg|\frac{d \varphi(x/\delta_k)}{dx}\bigg|\bone\{x\geq 0\}\nn\\
		&\qquad +\bigg|\frac{d \varphi(x/\delta_{k-1})}{dx}\bigg|\bone\{x< 0\}\\
		&\leq \frac{L(\varphi)}{\min\{\delta_k,\delta_{k-1}\}} \\
		&=\frac{ L(\varphi)}{\delta_k}\\
		&\leq \frac{2}{\delta} L(\varphi) \label{atm2}.
		\end{align}
		
		Now, by using \eqref{A5} with $\varphi=\varphi_k$ and $t$ is replaced by $t+\sqrt{\log k}$ and then the union bound, we obtain
		\begin{align}
		&\bbP\bigg(\exists f\in \calF: P\{f\leq 0\}> \inf_{k>0}\bigg[P_n\varphi_k(f)+ 4 L(\varphi_k) R_n(\calF) + \bigg(t+\sqrt{\log k}\bigg)\sqrt{\frac{\tau_{\min}}{n}}+ B_n\bigg]\bigg)\nn\\
		&\qquad \leq 2\sum_{k=1}^{\infty}\exp\big(-2\big(t+\sqrt{\log k}\big)^2\big)\\
		&\qquad \leq 2 \sum_{k=1}^{\infty} k^{-2}\exp\big(-2t^2\big)\\
		&\qquad=\frac{\pi^2}{3}\exp\big(-2t^2\big) \label{asp},
		\end{align} where \eqref{asp} follows from
		\begin{align}
		\frac{\pi^2}{6}=\sum_{k=1}^{\infty} k^{-2}.
		\end{align}
		By combining \eqref{asp} with \eqref{b3}, \eqref{atm}, and \eqref{atm2}, for any $t>0$, we obtain
		\begin{align}
		&\bbP\bigg(\exists f \in \calF: P\{f\leq 0\}> \inf_{\delta \in (0,1]}\bigg[P_n\varphi\bigg(\frac{f}{\delta}\bigg) +\frac{8L(\varphi)}{\delta}R_n(\calF)\nn\\
		&\qquad + \bigg(t+\log \log 2 \delta^{-1}\bigg)\sqrt{\frac{\tau_{\min}}{n}}+B_n\bigg]\bigg) \leq \frac{\pi^2}{3}\exp\big(-2t^2\big).
		\end{align}
		Finally, for any $\alpha \in (0,1]$, by choosing $t:=\sqrt{\frac{1}{2}\ln \frac{\pi^2}{3\alpha}}$, we obtain the following $\alpha$-PAC bound:
		\begin{align}
		P\{f\leq 0\}\leq \inf_{\delta \in (0,1]}\bigg[P_n\varphi\bigg(\frac{f}{\delta}\bigg)+\frac{8L(\varphi)}{\delta}R_n(\calF)+ \bigg(\sqrt{\frac{1}{2}\ln \frac{\pi^2}{3\alpha}}+\log \log 2 \delta^{-1}\bigg)\sqrt{\frac{\tau_{\min}}{n}}+B_n\bigg].
		\end{align}
	\end{proof}
	\section{A New Bernstein inequality for the mixed Markov chains} \label{beistein:sm}
	\subsection{A New Beinstein Inequality} 
	Before stating our main result, we introduce some new concepts in Markov chains. 
	\begin{definition} \label{def29} Let $\{X_n\}_{n=1}^{\infty}$ be a sequence of random variables on $\Lambda:=\Lambda_1 \times  \Lambda_2  \times  \cdots \times  \Lambda_n \times \cdots $. A Markov segment of this sequence is defined as a set of random variables $\{X_t,X_{t+1},\cdots, X_{t+K}\}$ such that the probability transition matrices $P_{i-1,i}$ are the same for all $i: t+1\leq i\leq t+K+1$ and $P_{t-1,t}\neq P_{t,t+1}, P_{t+K,t+K+1}\neq P_{t+K-1,t+K}$. Here,
		\begin{align}
		P_{i-1,i}(x,x'):=\bbP\big[X_i=x\big|X_{i-1}=x'\big]
		\end{align} and $(x,x') \in \Lambda_{i-1} \times \Lambda_i, \enspace \forall  i \in [n]$. The set of all different transition probability matrices is defined as $\calP$. The cardinality of $\calP$ is defined as $|\calP|$. 
	\end{definition}

	In this section, we introduce a modified version of Bernstein inequality for the mixed sequence $X^n$ based on the aggregated pseudo spectral gap, which extends the Bernstein inequality for Markov chain in \citep{Daniel2015}.
	\begin{definition} \label{keydef} Let $X_1,X_2,\cdots,X_n$ be a sequence of random variables on $\Lambda_1 \times  \Lambda_2  \times  \cdots \times  \Lambda_n$ with the transition probability sequence $\{P_{i-1,i}(\cdot,\cdot), i \in [n]\}$ defined in Definition \ref{def29}. 	
		Define $n$ linear operators $\bP_{i-1,i}f(x):=\int_{\Lambda_i} P_{i-1,i}(x,y) f(y) dy$ for all $i \in [n]$. Let $|\calP|$ be the number of different Markov chain segments with probability transition matrices on the set $\calP:=\{P_{i-1,i}: i \in [n]\}$. The \emph{aggregated pseudo spectral gap} of the random sequence $X_1,X_2,\cdots,X_n$ is define as
		\begin{align}
		\gamma_{\rm{aps}}=\min_{P \in \calP} \big(\max_{k\geq 1} \big\{\gamma((\bP^*)^k\bP^k)/k\big\}\big),
		\end{align} where $\bP$ is the linear operator associated with the transition matrix $P$\footnote{This definition generalize the pseudo-spectral gap concept in \citep{Daniel2015}.}.
	\end{definition}
	For a mixed dataset of $|\calP|$ Markov chains (data types), $\gamma_{\rm{aps}}=\min\{\gamma_{\rm{ps,P}}: P  \in \calP\}$ where $\gamma_{\rm{ps,P}}$ is the pseudo spectral gap of the $P$-th Markov chain. In general, $\gamma_{\rm{aps}}$ can be estimated based on the same method to estimate $\gamma_{\rm{ps}}$ for the order-$1$ Markov chain, and its value depend on the mixing times of these Markov chains (cf.~Subsection \ref{D2sup} in the supplement material).
	\begin{lemma} \label{thm6keyb} Let $X_1,X_2,\cdots,X_n$ be a sequence of random variable on $\Lambda_1 \times \Lambda_2  \times \cdots \times \Lambda_n$ with the transition probability sequence $P_{i-1,i}(\cdot,\cdot), i \in [n]$. Assume that $P_{i-1,i} \in \calP$ where $\calP$ is defined in Definition \ref{keydef}. In addition, the probability that each sample $X_n$ is taken from a chain segment $v_P$ is $\mu_P$ for all $P \in \calP$. We also assume that $X_1 \sim \nu$. For each $f \in \calF$ where $\calF$ is a class of uniformly bounded functions, i.e., $\|f\|_{\infty}\leq M<\infty$, denote by $V_f:=\max_{P\in \calP} \var_{\pi_P}(f)$.  Let $S:=\sum_{i=1}^n f(X_i)$, then 
		\begin{align}
		\bbP \bigg(\bigg|\frac{S}{n}-\sum_{P \in \calP} \mu_P \bbE_{\pi_P}[f(X)]\bigg|\geq u\bigg)\leq  2\eta\exp\bigg(-\frac{n u^2  \gamma_{\rm{aps}}}{8 |\calP| V_f\big(1+1/\gamma_{\rm{aps}}\big)+20uC}\bigg) \label{eq166}
		\end{align} for any $u\geq 0$, where\footnote{We assume that $\nu<<\pi_P$, so the corresponding Radon-Nikodym derivative exists.}
		\begin{align}
		\eta:=\max_{P \in \calP} \bigg\|\frac{d \nu}{d\pi_P} \bigg\|_{\infty} \label{defc}.
		\end{align}
	\end{lemma}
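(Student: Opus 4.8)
The plan is to exploit the block structure of the mixed sequence. Writing $\calT_P:=\{i: X_i\ \text{is drawn from chain }P\}$ as in the proof of Lemma \ref{ma:lem}, I would partition $S=\sum_{P\in\calP}S_P$ with $S_P:=\sum_{i\in\calT_P}f(X_i)$ and $T_P:=|\calT_P|$. Each $S_P$ is a partial sum \emph{along a single Markov chain} with transition kernel $P$, stationary law $\pi_P$, and initial law inherited from $\nu$; moreover, by the space-uncorrelated assumption of Section \ref{sub:setting} the blocks $\{S_P\}_{P\in\calP}$ are mutually independent. Conditioning on the partition (equivalently, taking $T_P=\mu_P n$ as in Lemma \ref{ma:lem}) I would write
\begin{align}
\frac{S}{n}-\sum_{P\in\calP}\mu_P\bbE_{\pi_P}[f]
=\sum_{P\in\calP}\frac{T_P}{n}\bigg(\frac{S_P}{T_P}-\bbE_{\pi_P}[f]\bigg),
\end{align}
so that the problem reduces to controlling each per-chain average $S_P/T_P-\bbE_{\pi_P}[f]$ and then aggregating the $|\calP|$ blocks.

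For a single chain I would invoke the pseudo-spectral-gap Bernstein inequality of \citep{Daniel2015} applied to the block $P$: for a \emph{stationary} start it gives, for each $t\ge 0$,
\begin{align}
\bbP_{\pi_P}\big(|S_P-T_P\bbE_{\pi_P}[f]|\ge t\big)
\le 2\exp\bigg(-\frac{t^2\gamma_{\rm{ps},P}}{8(T_P+1/\gamma_{\rm{ps},P})\var_{\pi_P}(f)+20tC}\bigg),
\end{align}
with $C$ a uniform bound on the centered summand (of order $M$) and $\gamma_{\rm{ps},P}$ the pseudo spectral gap of $P$. The non-stationary initial law $\nu$ is absorbed through the $L_\infty$ change of measure $\bbP_\nu(A)\le\bigg\|\frac{d\nu}{d\pi_P}\bigg\|_\infty\bbP_{\pi_P}(A)$, which supplies the prefactor $\eta_P:=\bigg\|\frac{d\nu}{d\pi_P}\bigg\|_\infty$. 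Bounding $\gamma_{\rm{ps},P}\ge\gamma_{\rm{aps}}$, $\var_{\pi_P}(f)\le V_f$, and $\eta_P\le\eta$, and using $T_P\gamma_{\rm{ps},P}\ge\gamma_{\rm{aps}}$ to replace $1/(T_P\gamma_{\rm{ps},P})$ by $1/\gamma_{\rm{aps}}$, each block is then controlled uniformly by the aggregated quantities $\gamma_{\rm{aps}}$, $V_f$, and $\eta$.

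Finally I would aggregate the blocks: splitting the deviation level $u$ across the $|\calP|$ chains and applying a union bound over $P\in\calP$ produces the factor $|\calP|$ multiplying the variance term in the denominator, after which a Chernoff-type rearrangement yields the stated form with constants $8$ and $20$ and the single prefactor $2\eta$. \textbf{The main obstacle} is the constant bookkeeping in this last step: one must split $u$ so that the aggregated gap $\gamma_{\rm{aps}}=\min_P\gamma_{\rm{ps},P}$ and the maximal variance $V_f=\max_P\var_{\pi_P}(f)$ emerge with exactly the factor $8|\calP|V_f(1+1/\gamma_{\rm{aps}})$ while the range term remains $20uC$, and one must reconcile the (possibly random) block sizes $T_P$ with the deterministic fraction $\mu_P n$ used above. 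The non-stationary-start reduction to $\pi_P$ via the $L_\infty$ Radon–Nikodym bound is the other delicate point, where care is needed to land a single factor $\eta$ rather than $\eta^{|\calP|}$.
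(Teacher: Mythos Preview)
Your plan diverges from the paper's argument in a way that explains precisely the two obstacles you flag. The paper does \emph{not} work at the tail-probability level with a union bound over blocks; it works at the moment-generating-function level and aggregates the blocks via Jensen's inequality. Concretely, after obtaining the single-chain MGF bound
\[
\bbE_{\nu}\big[e^{\theta \sum_{i \in \calT_P} (f(X_i)-\bbE_{\pi_P}[f])}\big]\le \eta\,\exp\!\bigg(\frac{2(|\calT_P|+1/\gamma_{\rm aps})V_f}{\gamma_{\rm aps}}\theta^2\Big(1-\tfrac{10\theta}{\gamma_{\rm aps}}\Big)^{-1}\bigg),
\]
the paper writes the total exponent as a convex combination $\sum_P \mu_P\big[(\theta/\mu_P)Z_P\big]$ and applies the convexity of $s\mapsto e^{\theta s}$ \emph{pointwise inside} the expectation, yielding $\bbE_\nu[e^{\theta(S-n\bbE)}]\le \sum_P \mu_P\,\bbE_\nu[e^{(\theta/\mu_P)Z_P}]$. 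Because this is a convex combination of quantities each carrying a single factor $\eta$, the prefactor stays $\eta$---not $\eta^{|\calP|}$ (as a product/independence argument would give) and not $|\calP|\eta$ (as a union bound would give). The auxiliary weights are then \emph{optimized} to $\mu_P\propto\sqrt{|\calT_P|+1/\gamma_{\rm aps}}$, and Cauchy--Schwarz on $\big(\sum_P\sqrt{|\calT_P|+1/\gamma_{\rm aps}}\big)^2\le |\calP|\sum_P(|\calT_P|+1/\gamma_{\rm aps})=|\calP|(n+|\calP|/\gamma_{\rm aps})$ is exactly what places the factor $|\calP|$ on the variance term. A single Chernoff step then produces the stated $8|\calP|V_f(1+1/\gamma_{\rm aps})$ and $20uC$.

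By contrast, your union-bound aggregation of per-block Bernstein tails will, after splitting $u=\sum_P u_P$, produce a sum of $|\calP|$ exponentials with a leading $2\eta$ on each term---so the best you can hope for in the prefactor is $2|\calP|\eta$, and matching the denominator constants simultaneously for the variance and range terms is awkward because the per-block exponents do not combine additively. Note also that the paper's Jensen step does not use the block-independence you invoke; it is a purely convexity argument, which is why the single $\eta$ survives without any factorization of the MGF. If you want to recover the exact statement, switch to the MGF route: bound each block's MGF, aggregate by Jensen with free weights, optimize the weights, apply Cauchy--Schwarz to extract the $|\calP|$ factor, and only then invoke Chernoff.
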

	\begin{proof}
		Let
		\begin{align}
		\calT_P=\{i: X_i \enspace \mbox{is taken from the Markov chain}\enspace P\}, \quad \forall P \in \calP.
		\end{align} 
		In addition, let $\gamma_{\rm{ps},P}$ be the pseudo-spectral gap of the Markov chain segment $P$ for each $P \in \calP$. Then, by \citep[Proof of Theorem 3.4]{Daniel2015} with $f_1=f_2=\cdots=f_n=f$, for each $P \in \calP$ we have 
		\begin{align}
		\bbE_{\pi_P}\big[e^{\theta \sum_{i \in \calT_P} (f(X_i)-\bbE_{\pi_P}[f(X_i)])}\big] &\leq \exp\bigg(\frac{2(|\calT_P|+1/\gamma_{\rm{ps,P}})\var_{\pi_P}(f)}{\gamma_{\rm{ps,P}}}\theta^2 \bigg(1-\frac{10\theta}{\gamma_{\rm{ps},P}}\bigg)^{-1}\bigg) \label{betcheb}\\
		&\leq \exp\bigg(\frac{2(|\calT_P|+1/\gamma_{\rm{aps}})V_f}{\gamma_{\rm{aps}}}\theta^2 \bigg(1-\frac{10\theta}{\gamma_{\rm{aps}}}\bigg)^{-1}\bigg) \label{betchec},
		\end{align} where \eqref{betchec} follows from $\gamma_{\rm{aps}}\leq \gamma_{\rm{ps},P}$ for all $P\in \calP$. 
		
		By using the change of measure \citep{Billingsley}, from \eqref{betchec}, we obtain
		\begin{align}
		\bbE_{\nu}\big[e^{\theta \sum_{i \in \calT_P} (f(X_i)-\bbE_{\pi_P}[f(X_i)])}\big] &\leq \bigg\|\frac{d\nu}{d\pi_P}\bigg\|_{\infty}\exp\bigg(\frac{2(|\calT_P|+1/\gamma_{\rm{aps}})V_f}{\gamma_{\rm{aps}}}\theta^2 \bigg(1-\frac{10\theta}{\gamma_{\rm{aps}}}\bigg)^{-1}\bigg) \\
		&\leq \eta \exp\bigg(\frac{2(|\calT_P|+1/\gamma_{\rm{aps}})V_f}{\gamma_{\rm{aps}}}\theta^2 \bigg(1-\frac{10\theta}{\gamma_{\rm{aps}}}\bigg)^{-1}\bigg)\label{betchece}.
		\end{align}
		
		Now, since the function $e^{\theta s}$ is convex in $s$, for any tuple $(\mu_1,\mu_2,\cdots,\mu_{|\calP|}) \in (0,1)^{|\calP|}$ such that $\sum_{P \in \calP} \mu_P=1$, it holds that 
		\begin{align}
		\bbE_{\nu}\big[e^{\theta (S-n\sum_{P \in \calP} \mu_P \bbE_{\pi_P}[f(X)])}\big]&=\bbE_{\nu}\big[e^{\sum_{P\in \calP} \mu_P (\theta/\mu_P)  \sum_{i \in \calT_P} (f(X_i)-\bbE_{\pi_P}[f(X_i)])}\big]\\
		&\leq \sum_{P\in \calP} \mu_P  \bbE_{\nu}\big[e^{(\theta/\mu_P) \sum_{i \in \calT_P} (f(X_i)-\bbE_{\pi_P}[f(X_i)])}\big] \label{bage}\\
		&=\sum_{P\in \calP} \mu_P  \bbE_{\nu}\big[e^{\theta \sum_{i \in \calT_P} \tilf_P(X_i)}\big]  \label{matbe}\\
		&\leq \eta\sum_{P \in \calP} \mu_P \exp\bigg(\frac{2(|\calT_P|+1/\gamma_{\rm{aps}})V_f}{\mu_P^2 \gamma_{\rm{aps}}}\theta^2 \bigg(1-\frac{10\theta}{\gamma_{\rm{aps}}}\bigg)^{-1}\bigg) \label{matbe2}
		\end{align} where \eqref{matbe} follows from \eqref{betcheb} by setting $\tilf_P(x):=(f(x)-\bbE_{\pi_P}[f(X)])/\mu_P$, \eqref{matbe2} follows from \eqref{betchece}.
		
		Now, let 
		\begin{align}
		\beta:= \frac{2V_f}{\gamma_{\rm{aps}}}\theta^2 \bigg(1-\frac{10\theta}{\gamma_{\rm{aps}}}\bigg)^{-1} \label{batcha}.
		\end{align}
		Then, from \eqref{matbe2} and \eqref{batcha}, it holds that
		\begin{align}
		\bbE_{\nu}\big[e^{\theta (S-n\sum_{P\in \calP}\mu_P\bbE_{\pi_P}[f(X)])}\big]\leq \eta \sum_{P \in \calP} \mu_P \exp\bigg(\frac{\beta (|\calT_P|+1/\gamma_{\rm{aps}})}{\mu_P^2} \bigg) \label{mutatt1}.
		\end{align}
		By setting 
		\begin{align}
		\mu_P:=\frac{\sqrt{ |\calT_P|+1/\gamma_{\rm{aps}}}}{\sum_{P \in \calP} \sqrt{|\calT_P|+1/\gamma_{\rm{aps}}} }, \quad P \in \calP,
		\end{align} from \eqref{mutatt1}, we have
		\begin{align}
		\bbE_{\nu}\big[e^{\theta (S-n\sum_{P\in \calP}\mu_P\bbE_{\pi_P}[f(X)])}\big] &\leq \eta \sum_{P \in \calP} \mu_P \exp\bigg(\beta \bigg[\sum_{P \in \calP} \sqrt{|\calT_P|+1/\gamma_{\rm{aps}}} \bigg]^2 \bigg) \label{mot2}\\
		&=  \eta \exp\bigg(\beta \bigg[\sum_{P \in \calP} \sqrt{|\calT_P|+1/\gamma_{\rm{aps}}} \bigg]^2 \bigg)\\
		&\leq \eta \exp\bigg(\beta |\calP| \sum_{P \in \calP} |\calT_P|+1/\gamma_{\rm{aps}}\bigg)\\
		&= \eta \exp\big(\beta |\calP| \big(n+|\calP|/\gamma_{\rm{aps}}\big)\big) \label{mutatt2}\\
		&=\eta \exp\bigg(\frac{2V_f}{\gamma_{\rm{aps}}}\theta^2 \bigg(1-\frac{10\theta}{\gamma_{\rm{aps}}}\bigg)^{-1} |\calP| \big(n+|\calP|/\gamma_{\rm{aps}}\big)\bigg)\\
		&\leq \eta \exp\bigg(\frac{2V_f}{\gamma_{\rm{aps}}}\theta^2 \bigg(1-\frac{10\theta}{\gamma_{\rm{aps}}}\bigg)^{-1}  n|\calP|\big(1+1/\gamma_{\rm{aps}}\big)\bigg) \label{mutat3},
		\end{align} where \eqref{mutat3} follows from $|\calP| \leq n$. 
		
		By using Chernoff's bound, it can be shown that 
		\begin{align}
		\bbP_{\nu}\big(\big|S-n\sum_{P\in \calP}\mu_P\bbE_{\pi_P}[f(X)]\big|\geq t\big) \leq 2\eta \exp\bigg(-\frac{t^2 \gamma_{\rm{aps}}}{8 n|\calP|\big(1+1/\gamma_{\rm{aps}}\big)V_f+20tC}\bigg) \label{mobet}
		\end{align} where \eqref{mobet} follows from \citep{Daniel2015} with $n$ being replaced by $n |\calP|$ by the fact \eqref{mutat3}, where $\eta$ is defined in \eqref{defc}. 
	\end{proof} 
	\subsection{Relationship between the aggregated pseudo spectral gap and the mixing time}\label{D2sup}
	In this section, we provide a relationship between the \emph{aggregated pseudo spectral gap} $\gamma_{\rm{aps}}$ of the sequence $X_1,X_2,\cdots,X_n$ to its mixing time, called \emph{aggregated mixing time}. First, we introduce the aggregated mixing time of a sequence of random variables.
	\begin{definition} \label{keydefmix} Let $X_1,X_2,\cdots,X_n$ be a sequence of random variables on $\Omega_n:=\Lambda_1 \times \Lambda_2  \times \cdots \times \Lambda_n$ with the transition probability sequence $\calP:=P_{i-1,i}(\cdot,\cdot), i \in [n]$, and
		\begin{align}
		P_{i-1,i}(x,x'):= \bbP\big[X_i=x|X_{i-1}=x'\big], \quad \forall x, x' \in \Lambda_{i-1}\times \Lambda_i.
		\end{align}
		Define $n$ linear operators $\bP_{i-1,i}f(x):=\int_{\Lambda_i} P_{i-1,i}(x,y) f(y) dy$ for all $i \in [n]$. Let $\calP:=\{P_{i-1,i}: i \in [n]\}$. Assume that  for any $P \in \calP$, the Markov segment with stochastic matrix $P$ has a stationary distribution $\pi_P$. Furthermore, for any $\eps>0$ and $P \in \calP$, let
		\begin{align}
		t_{\rm{mix},P}(\eps):=\min\big\{t:  \sup_{x\in \Omega} d_{\rm{TV}}\big(P^t(x,\cdot),\pi_P\big) \leq \eps \big\}.
		\end{align} 
		Then, the \emph{aggregated mixing time} of the random sequence $X_1,X_2,\cdots,X_n$ is defined as
		\begin{align}
		t_{\rm{aps}}(\eps)=\max_{P \in \calP} t_{\rm{mix},P}(\eps).
		\end{align} 
	\end{definition}
	From the above facts, the following result can be proved.
	\begin{theorem} \label{mixbound} For any sequence $X_1,X_2,\cdots,X_n$ such that all its Markov segments have the stationary distribution, it holds that
		\begin{align}
		\gamma_{\rm{aps}}\geq \frac{1-2\eps}{t_{\rm{amix}}(\eps)}, \quad \forall 0\leq \eps<1/2.
		\end{align}
		In particular, we have
		\begin{align}
		\gamma_{\rm{aps}}\geq \frac{1}{2t_{\rm{amix}}}
		\end{align} where
		\begin{align}
		t_{\rm{amix}}:=t_{\rm{amix}}(1/4).
		\end{align}
	\end{theorem}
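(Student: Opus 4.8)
The plan is to reduce the aggregated statement to the known single-chain bound relating the pseudo spectral gap to the mixing time, and then pass through the $\min$/$\max$ by elementary monotonicity. Recall from \citep{Daniel2015} (Proposition~3.4) that for a single irreducible Markov chain with pseudo spectral gap $\gamma_{\rm{ps}}$ and mixing time $t_{\rm{mix}}(\eps)$, one has
\begin{align}
\gamma_{\rm{ps}} \geq \frac{1-2\eps}{t_{\rm{mix}}(\eps)}, \qquad 0 \leq \eps < 1/2.
\end{align}
First I would apply this bound to each Markov segment $P \in \calP$ separately, obtaining $\gamma_{\rm{ps},P} \geq (1-2\eps)/t_{\rm{mix},P}(\eps)$ for every $P$, where $t_{\rm{mix},P}(\eps)$ is the mixing time of the segment $P$ introduced in Definition~\ref{keydefmix}.

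Next, I would invoke the definition of the aggregated mixing time, $t_{\rm{amix}}(\eps) = \max_{P \in \calP} t_{\rm{mix},P}(\eps)$, which gives $t_{\rm{mix},P}(\eps) \leq t_{\rm{amix}}(\eps)$ for every $P$. Since $1 - 2\eps > 0$ for $\eps < 1/2$, dividing preserves the direction of the inequality and yields the uniform lower bound
\begin{align}
\gamma_{\rm{ps},P} \geq \frac{1-2\eps}{t_{\rm{mix},P}(\eps)} \geq \frac{1-2\eps}{t_{\rm{amix}}(\eps)}, \qquad \forall P \in \calP.
\end{align}
Taking the minimum over $P \in \calP$ on the left and recalling $\gamma_{\rm{aps}} = \min_{P \in \calP} \gamma_{\rm{ps},P}$ from Definition~\ref{keydef} then delivers the first claim, $\gamma_{\rm{aps}} \geq (1-2\eps)/t_{\rm{amix}}(\eps)$.

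The particular case is then immediate by specializing $\eps = 1/4$: the numerator becomes $1 - 2 \cdot (1/4) = 1/2$, so $\gamma_{\rm{aps}} \geq (1/2)/t_{\rm{amix}}(1/4) = 1/(2 t_{\rm{amix}})$, with $t_{\rm{amix}} = t_{\rm{amix}}(1/4)$ as defined. I do not expect a genuine obstacle here: the only substantive input is the single-chain estimate from \citep{Daniel2015}, and the aggregation step is pure monotonicity — the $\min$ of a family of lower bounds that each exceed the common value $(1-2\eps)/t_{\rm{amix}}(\eps)$ must itself exceed that value. The one point requiring care is the sign bookkeeping: one must keep $\eps < 1/2$ so that $1-2\eps$ stays positive, ensuring that bounding $t_{\rm{mix},P}(\eps)$ from above by $t_{\rm{amix}}(\eps)$ correctly weakens (rather than reverses) the lower bound on $\gamma_{\rm{ps},P}$.
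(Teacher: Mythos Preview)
Your proposal is correct and follows essentially the same approach as the paper: establish the single-chain bound $\gamma_{\rm{ps},P} \geq (1-2\eps)/t_{\rm{mix},P}(\eps)$ and then aggregate via $\min_P$ and $\max_P$. The only difference is cosmetic---the paper re-derives the single-chain estimate in detail (via Gelfand's formula and the total-variation bound on $\|(\bP^*)^{t_{\rm{mix},P}(\eps)}\bP^{t_{\rm{mix},P}(\eps)}-\bpi\|_\infty$), whereas you quote it directly from \citep{Daniel2015}.
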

	\begin{proof} The proof is based on \citep[Proof of Proposition 3.4]{Daniel2015}. First, we show that for all $P \in \calP$, it holds that
		\begin{align}
		\gamma\big((\bP^*)^{\tau_{\rm{mix},P}(\eps)} \bP^{\tau_{\rm{mix},P}(\eps)}\big)\geq 1-\eps \label{afact1}.
		\end{align}
		Indeed, let $L_{\infty}(\pi_P)$ be the set of $\pi$-almost surely bounded functions, equipped with the infinity-norm $\|f\|_{\infty}:=\mbox{ess sup}_{x \in \Omega_n} |f(x)|$. Then $L_{\infty}(\pi_P)$ is a Banach space. In addition, for any $P \in \calP$, $\bP^* \bP$  is self-adjoint, bounded linear operator on $L_2(\pi)$. Define the operator $\bpi$ on $L_2(\pi)$ as $\bpi(f)(x):=\pi(f)$. This is also a self-adjoint bounded operator. Let $\bM:=(\bP^*)^{\tau_{\rm{mix},P}(\eps)}\bP^{\tau_{\rm{mix},P}(\eps)}-\bpi$. Then, we can express the absolute spectral gap of $\gamma_P^*$ of $(\bP^*)^{\tau_P(\eps)}\bP^{\tau_P(\eps)}$ as \citep{Daniel2015}:
		\begin{align}
		\gamma_P^*=1-\sup\bigg\{|\lambda|: \lambda \in \calS_2(\bM)\bigg\} \label{ub1}, 
		\end{align} where the spectrum $\calS_2$ is defined in Subsection 2.2 of the main document.  Thus, $1-\gamma^*$ equals to the spectral radius of $\bM$ on $L_2(\pi_P)$. Since $L_{\infty}(\pi_P)$ is a dense subset of the Hilbert space $L_2(\pi_P)$, it holds that $L_{\infty}(\pi_P)$ is also a Banach space. Hence, by Gelfand's theorem, it holds that
		\begin{align}
		\sup\big\{|\lambda|: \lambda \in \calS_2(\bM)\big\}=\lim_{k\to \infty} \|\bM^k\|_{\infty}^{1/k} \label{ub2}.
		\end{align} 
		Hence, from \eqref{ub1} and \eqref{ub2}, we have
		\begin{align}
		1-\gamma_P^*&=\lim_{k\to \infty} \|\bM^k\|_{\infty}^{1/k} \label{bulet}\\
		&\leq \|\bM\|_{\infty}\\
		&=\|(\bP^*)^{t_{\rm{mix},P}(\eps)} \bP^{t_{\rm{mix},P}(\eps)}-\bpi\big\|_{\infty}  \label{S1}.
		\end{align}
		Now, from definition, it is easy to see that
		\begin{align}
		\bP^n f(x)=\int_y P^n(x,y) f(y) dy, \quad \forall n \in \bbZ_+.
		\end{align}
		It follows that
		\begin{align}
		&\|(\bP^*)^{t_{\rm{mix},P}(\eps)}  \bP^{t_{\rm{mix},P}(\eps)}-\bpi\big\|_{\infty}\nn\\
		&\qquad =\mbox{ess sup}_{x \in \Omega_n} \sup_{\|f\|_\infty=1}  \big|\big((\bP^*)^{t_{\rm{mix},P}(\eps)} \bP^{t_{\rm{mix},P}(\eps)}-\bpi \big)f(x)\big|\\
		&\qquad =\mbox{ess sup}_{x \in \Omega_n} \sup_{\|f\|_\infty=1}  \bigg|\int_y \int_{y'} (P^*)^{t_{\rm{mix},P}(\eps)}(x,y')  P^{t_{\rm{mix},P}(\eps)}(y',y)-\int_{y} \pi(y) f(y) dy \bigg|\\
		&\qquad \leq \mbox{ess sup}_{x \in \Omega_n} \int_y \bigg|\int_{y'} \big(P^*\big)^{t_{\rm{mix},P}(\eps)} (x,y')  P^{t_{\rm{mix},P}(\eps)}(y',y)dy'-\pi(y)\bigg| dy \\
		&\qquad = \mbox{ess sup}_{x \in \Omega_n} \int_y \bigg|\int_{y'} (P^*)^{t_{\rm{mix},P}(\eps)}(x,y')  P^{t_{\rm{mix},P}(\eps)}(y',y)dy'-\int_{y'}  \big(P^*\big)^{t_{\rm{mix},P}(\eps)}(x,y') \pi(y)dy'\bigg| dy \\
		&\qquad \leq  \mbox{ess sup}_{x \in \Omega_n} \int_y \int_{y'} (P^*)^{t_{\rm{mix},P}(\eps)}(x,y')  \big|P^{t_{\rm{mix},P}(\eps)}(y',y)- \pi(y)\big|dy' dy \\
		&\qquad \leq  \bigg(\mbox{ess sup}_{y'\in \Omega_n} \int_y \big|P^{t_{\rm{mix},P}(\eps)}(y',y)- \pi(y)\big| dy\bigg) \bigg(\mbox{ess sup}_{x \in \Omega_n} \int_{y'} (P^*)^{t_{\rm{mix},P}(\eps)}(x,y')dy'\bigg)   \\
		&\qquad =\mbox{ess sup}_{y'\in \Omega_n} \int_y \big|P^{t_{\rm{mix},P}(\eps)}(y',y)- \pi(y)\big| dy\\
		&\qquad =2\enspace \mbox{ess sup}_{x \in \Omega_n} d_{\rm{TV}}(P^{t_{\rm{mix},P}(\eps)}(x,\cdot),\pi)\\
		&\qquad \leq 2\eps \label{mubab}.
		\end{align}
		From \eqref{S1} and \eqref{mubab}, we have
		\begin{align}
		\gamma_P^* \geq 1-2\eps, \qquad \forall P \in \calP,
		\end{align} which leads to \eqref{afact1}. It follows that
		\begin{align}
		\gamma_{\rm{ps},P}:=\max_{k\geq 1} \gamma\big((\bP^*)^k \bP^k/k\big)\geq \gamma\big((\bP^*)^{t_{\rm{mix},P}(\eps)} \bP^{t_{\rm{mix},P}(\eps)}/t_{\rm{mix},P}(\eps)\big)\geq \frac{1-2\eps}{t_{\rm{mix},P}(\eps)} \label{afact2}.
		\end{align}
		From the definitions of $t_{\rm{amix}}(\eps)$, $t_{\rm{mix},P}(\eps)$ and \eqref{afact2}, we obtain
		\begin{align}
		\gamma_{\rm{amix}}&=\max_{P \in \calP} \gamma_{\rm{ps},P}\\
		&\geq \max_{P \in \calP} \frac{1-2\eps}{t_{\rm{mix},P}(\eps)}\\
		&= \frac{1-2\eps}{t_{\rm{amix}}(\eps)}.
		\end{align}
	\end{proof} 
	\begin{definition} A Markov chain $(P,\pi)$ is ergodic if $P^k>0$ (entry-wise positive) for some $k\geq 1$, i.e., $P$ is a \emph{primitive} matrix. If $P$ is ergodic, it has a unique stationary distribution $\pi$ and moreover $\pi_* >0$ where $\pi_*=\min_{i \in \calS} \pi_i$ is called the \emph{minimum stationary probability}. It is obvious that an ergodic Markov chain is irreducible and recurrent.   
	\end{definition}
	It is known that the mixing time of an ergodic and reversible Markov chain is controlled by its absolute spectral gap $\gamma^*$ and the minimum stationary probability $\pi_*$:
	\begin{align}
	\bigg(\frac{1}{\gamma^*}-1\bigg)\ln 2 \leq t_{\rm{mix}}\leq \frac{\ln (4/\pi_*)}{\gamma_*}.
	\end{align}  For non-reversible Markov chains, the relationship between the spectrum and the mixing time is not nearly as straightforward. Any complex eigenvalue with $|\lambda|\leq 1$ provides a lower bound on the mixing time 
	\begin{align}
	\bigg(\frac{1}{1-|\lambda|}-1\bigg)\ln 2\leq t_{\rm{mix}},
	\end{align} and upper bounds may be obtained in terms of the spectral gap of the multiplicative reversiblization
	\begin{align}
	t_{\rm{mix}}\leq \frac{2}{\gamma(\bP^* \bP)}\ln \bigg(2 \sqrt{\frac{1-\pi_*}{\pi_*}}\bigg).
	\end{align}
	Unfortunately, the later estimate is far from sharp. In \citep{Daniel2015}, the following bound is given.
	\begin{align}
	\frac{1}{2 \gamma_{\rm{ps}}} \leq t_{\rm{mix}}\leq \frac{1}{\gamma_{\rm{ps}}}\bigg(\ln \frac{1}{\pi_*}+2 \ln 2+1 \bigg).
	\end{align}
	\citep{WK19ALT} showed algorithms to estimate $\gamma_{\rm{ps}}$ and $\tau_{\rm{mix}}$ of an arbitrary ergodic finite-state Markov chain from a single trajectory of length $m$.
\end{document}